\documentclass[10pt]{fairmeta}
\usepackage{graphicx}
\usepackage{subcaption}
\usepackage{wrapfig}
\usepackage{booktabs}
\usepackage{caption}
\usepackage{array}
\usepackage{algorithm}
\usepackage{algpseudocode}
\usepackage{amsmath,amssymb,booktabs}
\usepackage{amsthm}
\newtheorem{theorem}{Theorem}
\newtheorem{lemma}{Lemma}
\newtheorem{corollary}{Corollary}
\title{Action-Consequence Alignment for Reliable Planning and Self-Improving in Latent World Models}

\author[1,2]{Jinping Wang}
\author[2]{Zhiqiang Gao}
\author[3]{Xiantong Zhen}
\author[1,3]{Ling Shao}
\affiliation[1]{University of the Chinese Academy of Sciences}
\affiliation[2]{International Frontier Interdisciplinary Research Institute (IFIRI), Wenzhou-Kean University}
\affiliation[3]{OPI Labs}
\abstract{
Latent world models learn to predict observed transitions, yet low prediction error alone does not guarantee reliable planning. Inspired  by self-tickling experiments in neuroscience showing that disrupting motor–sensory correspondence increases prediction mismatch, we examine whether learned world models preserve an analogous action–consequence correspondence.
The results show nearby alternatives can receive lower prediction errors despite producing physical outcomes farther from the recorded target. With that future treated as a goal, this reveals a concrete prediction–planning mismatch: the model assigns a lower cost to an action that achieves the target less accurately. To mitigate this gap, we introduce  Action–Consequence Alignment (ACA), a training objective that complements forward prediction by penalizing the prediction-error advantage of locally searched alternatives over factual actions without additional model components or environment interactions during training. The same principle can also guide additional data collection for self-improvement. We demonstrate that across diverse environments and evaluation settings, ACA improves planning performance and reduces real goal error, while ACA-guided data collection outperforms random local sampling. These results support action–consequence alignment as a practical principle for bridging predictive learning and reliable planning.
}

\projectpage{https://github.com/smithgun2005/ACA}
\begin{document}
\maketitle
\section{Introduction}

World models \cite{worldmodels} provide an internal predictive mechanism for agents to anticipate future states and plan towards the desired outcomes. Instead of reconstructing high-dimensional observations, recent latent world models aim to predict future states and learn  compact representations directly in the latent space \cite{jepa,assran2025vJEPA-2,maes2026leworldmodel}. However, low prediction error on observed transitions does not by itself guarantee reliable comparisons between candidate actions and their consequences, as required for reliable planning. In many visual control environments, the next observation is highly predictable from  visual inertia, repetitive movement patterns and correlations induced by the behavior policy. A model can still achieve low prediction error while responding weakly to its action inputs \cite{zhang2026identifiability,li2026dueling,coco}.

Recent end-to-end latent world models have introduced anti-collapse objectives \cite{bardes2021vicreg,maes2026stable,inverse,zhang2026delta,balestriero2025lejepa} to learn stable predictive representations. However, preventing representation collapse does not ensure that a predictor captures the role of actions in producing future observations. 
Several recent approaches in latent action-conditioned world model recognize related failures. SMWM \cite{inverse} and Delta-JEPA \cite{zhang2026delta} introduce an inverse dynamic model to recover executed actions from the consecutive representations or their latent displacement. Other recent related approaches on different classes of world models also notice related problems and mainly introduce inductive biases or post-hoc verification mechanisms to inject action–consequence causality \cite{coco,li2026dueling,seo2026acid}. 
These concerns point to a mismatch between predictive training and planning objectives. Training fits observed futures under recorded actions, whereas planning optimizes actions to make predicted futures match a desired goal. Accurate predictions on recorded transitions therefore do not by themselves ensure that the resulting planning costs favor actions with better physical outcomes. We refer to this mismatch between factual prediction fitting and reliable action selection as the prediction–planning gap. 
This observation raises a fundamental question:
can the causal correspondence between an executed action and its observed consequence provide an intrinsic consistency principle for training that helps mitigate this gap?

%When the predictor is conditioned on actions, the latent world models provide a concise and effective paradigm for visual control that the agent is able to generate candidate action sequences in the learned latent space and select the sequence has the closest prediction result compared to the target. However, this approach is based on an fundamental premise: the latent action-conditioned should not only predict what might happen next, but also predict what will result from executing a given action.

We approach this question from the classical theory of forward models in neuroscience in which the sensorimotor control theory bears a strong similarity to planning in latent world model. An internal forward model predicts the sensory consequence of a motor command and update the model using the signal that discrepancies between the predicted and observed consequences provide \cite{miall1996forward,wolpert1995internal}. Self-tickling experiments show that: when the temporal or spatial relationship between a movement and its sensory consequence is perturbed, the resulting sensation becomes more surprising \cite{blakemore1999spatio}. 
Inspired by this experimental logic, we examine a trained world model by keeping the current state and observed future fixed, varying the conditioning action, and comparing the resulting prediction errors. 
We find nearby alternatives that receive lower errors than the factual action, even though executing them from the same simulator state moves the physical outcome away from the recorded target. With the recorded future treated as a goal, the model thus favors an action that achieves the target less accurately.

%This observation indicates a gap between prediction and planning in latent world models. During training, the model minimizes prediction error on observed transitions under recorded actions. During planning, the model is held fixed and candidate actions are optimized to minimize the predicted discrepancy from a desired outcome. 
%Fitting factual predictions therefore does not explicitly train the relative action comparisons on which planning relies, leaving room for alternative actions to receive misleadingly low costs.
These observations are consistent with the prediction–planning gap discussed above, suggesting that the actions favored by a learned predictor may not lead to better physical goal attainment.
To mitigate this prediction–planning gap, we introduce Action–Consequence Alignment (ACA). For each observed transition, ACA uses a single action-gradient step to search for a nearby alternative. ACA then penalizes a positive minibatch-averaged prediction-error
advantage of the searched alternatives over factual actions \cite{sam}. Combined with forward prediction, this objective provide direct supervision on how the predictor evaluates neighboring actions that may mislead planning, without additional model components or environment interaction during offline training.

%Cost-landscape visualizations and evaluations of executed plans further examine how training action comparisons affects downstream planning, showing lower real goal errors for plans selected by ACA-trained models
We demonstrate across all environments with multiple training and evaluation settings, ACA consistently achieves better planning performance. Moreover, ACA offers an effective technique to guide additional data collection for self-improvement. Our main contributions can be concluded as:
\begin{itemize}
% Inspired by self-tickling experiments in neuroscience, we probe latent world models through controlled action perturbations while keeping the current state and observed future fixed.
    \item We identify a prediction–planning mismatch in latent world models. Inspired by self-tickling experiments, we show that nearby counterfactual actions can receive lower prediction errors than the factual action while producing physical outcomes farther from the observed target.

    \item We introduce Action–Consequence Alignment (ACA), a simple training principle that searches for locally misleading counterfactual actions and penalizes their minibatch-averaged prediction-error advantage over the factual action, without additional model components or environment interaction during offline training. ACA improves planning performance across all evaluated environments and multiple training and evaluation settings.

    \item We further extend ACA into a self-improving learning mechanism. ACA enables the model to use its own action–consequence inconsistencies to identify informative counterfactual actions for additional data collection, forming a self-improvement loop that yields better subsequent planning than random local sampling.

\end{itemize}

\section{Problem Setting}
\paragraph{Offline transitions.}
Following \cite{inverse,maes2026leworldmodel}, we consider an offline dataset \(
\mathcal{D} = \{(x_t, a_t, x_{t+1})\}
\) of transitions, where \(x_t,x_{t+1} \in \mathcal{X}\) are consecutive observations and \(a_t \in \mathcal{A} \subseteq \mathbb{R}^{d_a}
\) is the action executed between them. We do not require a  behavior policy or a reward function. The goal is to learning a latent action-conditioned world model consisting of an encoder \(
f_{\theta} : \mathcal{X} \rightarrow \mathcal{Z}
\) and a forward predictor \(P_{\phi} : \mathcal{Z} \times \mathcal{A} \rightarrow \mathcal{Z},\), here \(\mathcal{Z}\) denotes the latent state space.

\paragraph{Latent forward prediction.}
Given a transition $(x_t, a_t, x_{t+1})$, we encode
both observations using the same encoder \(f_{\theta}\) and predict the next conditioned on an action $a$ with the forward predictor \(P_{\phi}\):
\begin{equation}
    z_t = f_{\theta}(x_t), \qquad
    z_{t+1} = f_{\theta}(x_{t+1}), \qquad
    \hat{z}_{t+1}(a) = P_{\phi}(z_t, a).
\end{equation}
The prediction objective is to match the embedding of the next observation with the following forward loss:
\begin{equation} \label{predloss}
    \mathcal{L}_{\mathrm{pred}}(\theta,\phi)
=
\mathbb{E}_{(x_t,a_t,x_{t+1})\sim\mathcal{D}}
\left[
\left\|
P_{\phi}(z_t,a_t)-z_{t+1}
\right\|_2^2
\right].
\end{equation}
This training objective alone will admits a trivial solution when the encoder maps every observation to a constant embedding $c$ (\( f_{\theta}(x_t)= f_{\theta}(x_{t+1})=c\)), then the prediction loss \(\mathcal{L}_{\mathrm{pred}}
=0\). To prevent such collapse, we use an anti-collapse regularization term \(\mathcal{L}_{anti}\) by adopting \(\mathcal{L}_{inv}\) \cite{inverse} and \(\mathcal{L}_{sig}\) \cite{maes2026leworldmodel}. 
\section{Action–Consequence Correspondence in Latent World Models}

\subsection{From Factual Prediction to Action Selection}
%%%%直接说因果问题

%%A central challenge in recent latent world models is learning stable representations to prevent representation collapse caused by trivial solution of the encoder mapping \cite{dinowm,maes2026leworldmodel,PLDM}. However, preventing representation collapse can not ensure the trained world model can captures the correct action consequence relation.

In visual control environments, a latent world model tends to rely on visual inertia, the statistical correlations with in the dataset to predict the future even if the model is formally conditioned on actions. Thus, the model may achieve a low prediction error while responding weakly to action variations \cite{liu2024multi,gu2024advancing, seo2026acid,coco}. More generally, for end-to-end world models, forward training and planning evaluate the predictor differently. Forward training measures prediction error under the action recorded in each transition, while planning compares candidate actions through their predicted consequences. The factual prediction objective does not explicitly compare the error assigned to the recorded action with that assigned to an alternative action for the same observed future. This raises a natural question: how can an observed action–consequence pair serve as a reference for evaluating alternative actions? A useful starting point comes from neuroscience, where perturbations to the correspondence between movement and sensory feedback reveal how internal forward models evaluate expected and unexpected consequences.

\subsection{Neuroscientific Inspiration: Action–Consequence Mismatch in Forward Models}

The classical forward model theory shows that the brain uses an efference copy of a motor command to predict the sensory consequence. The discrepancy between the predicted sensory consequence and the actual feedback is used to correct state estimates, modulate sensory responses and update the internal model \cite{wolpert1995internal,miall1996forward,bubic2010prediction}.
This structure resembles action-conditioned prediction in contemporary world models, which have also been widely discussed \cite{worldmodels,taniguchi2023worldpred,rao2024sensory,jepa}.

\begin{figure}[t
]
    \centering
    \includegraphics[width=1\linewidth]{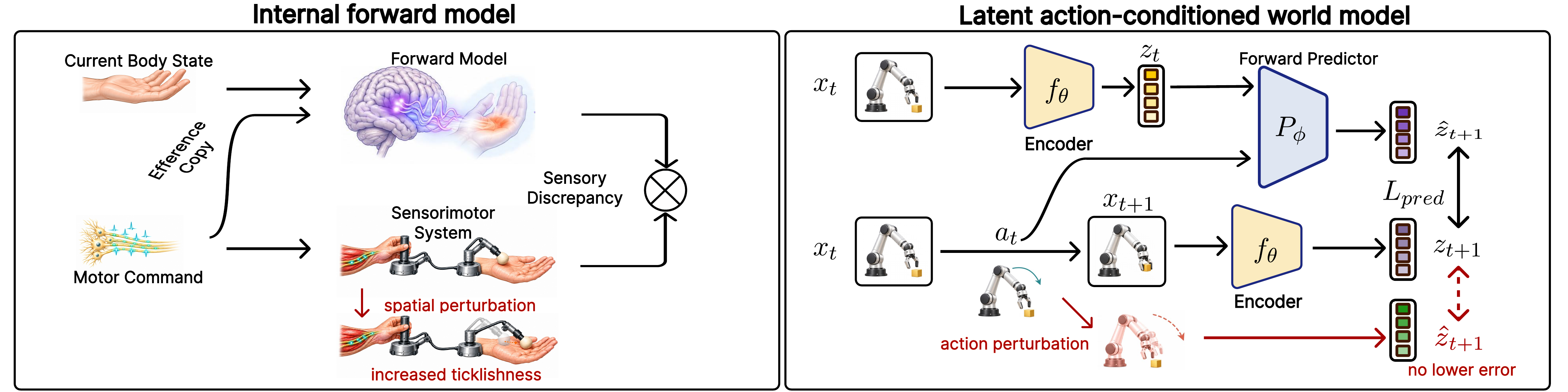}
    \caption{Structural correspondence between biological forward models (left) and latent action-conditioned world models (right). Both use an action signal to predict the resulting consequence and use the discrepancy between predicted and observed outcomes to update the internal model. 
}
    \label{fig:placeholder}
\end{figure}
This connection motivates us to investigate the relationship between action and forward prediction in latent world models. The self-tickling experiment offers a particularly revealing example. 
\cite{blakemore1999spatio} examined the action-sensation matching using a two robot apparatus which the participants moved one robot with one hand while a second robot delivered tactile stimulation to the other hand. The experiments show that when introducing a temporal delay between the movement and its tactile consequence or perturbing the spatial corresponding between their trajectories, the ticklishness increase progressively. 

%The experiment gives us an inspiration of examining whether forward prediction makes effective use of the action condition: keep the observed outcome fixed while perturbing the action used, if the forward prediction captures the distinct consequences associated with perturbed actions, a mismatched action condition should not produce a smaller prediction error—analogous to the increased ticklishness in the experiment.  
This perturbation based experimental logic inspires a direct probe of learned action comparisons. We keep the current state and the observed future fixed, perturbing the conditioning action and examine how the prediction error changes. Executing the corresponding actions then allows us to relate these model preferences to physical outcomes.

\begin{wrapfigure}{r}{0.5\textwidth}
    \centering

    \begin{subfigure}[t]{0.47\linewidth}
        \centering
        \includegraphics[width=\linewidth]{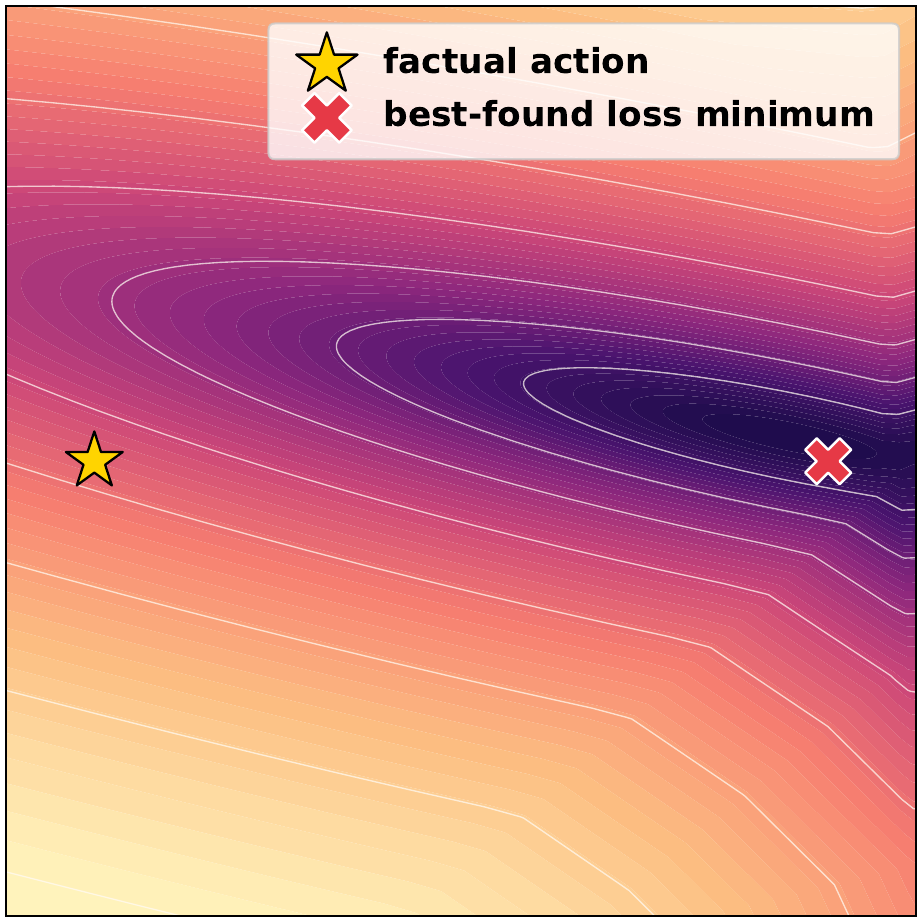}
        \caption{}
        \label{fig:a}
    \end{subfigure}%
    \hspace{0.01\linewidth}
    \begin{subfigure}[t]{0.47\linewidth}
        \centering
        \includegraphics[width=\linewidth]{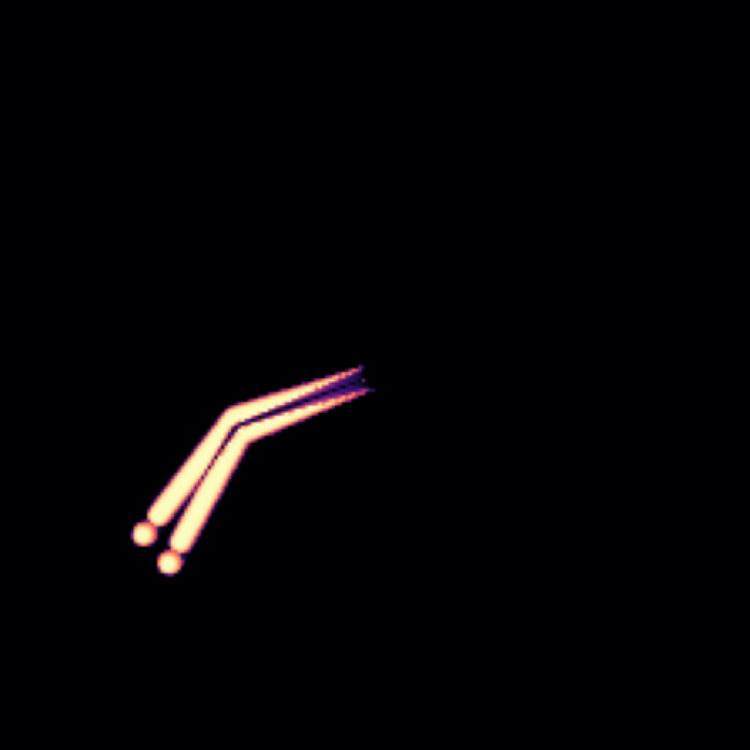}
        \caption{}
        \label{fig:b}
    \end{subfigure}
    \caption{(a) Local action-loss landscape for a fixed Reacher transition. The landscape is a two-dimensional slice of the 10-dimensional five-step action space. (b) Absolute pixel-wise difference between the MuJoCo renders obtained by executing the two action sequences for five steps from the same recorded initial simulator state. }
    \label{landscape}
\end{wrapfigure}
As illustrated in Fig.~\ref{landscape}, for a fixed transition, we visualize the action-loss landscape using the trained predictor and  search an alternative counterfactual action with lower prediction error. The resulting action attains a substantially lower prediction error than the factual action. However, executing the two actions from the same initial state produces visibly different physical outcomes. We make the same observations in more thorough experiments across four environments as shown in Appendix~\ref{app:ordering_reversals}.
%extends this example  and both baseline models. We measure the frequency of lower-error alternatives over 10,000 sampled transitions per environment and evaluate their physical consequences on 1,000 transitions with lower-error alternatives per model–environment–radius setting. When factual replay reproduces the recorded target, these alternatives receive lower model error despite moving away from that target under execution. 

These results expose a mismatch between the actions favored by the learned predictor and those that achieve the desired physical outcome. When the recorded future is treated as the goal, a nearby alternative can receive a lower prediction error than the factual action yet lead to a larger physical goal error. Thus, minimizing the learned prediction error can favor actions that move the system farther from the goal, which is a specific manifestation of the prediction-planning gap. This lower error under a mismatched action runs counter to the intuition drawn from the self-tickling experiments, where disrupting action–consequence correspondence makes sensory feedback less predictable. The contrast motivates a training principle that encourages the predictor to preserve this correspondence when comparing nearby actions, beyond fitting factual transitions alone.

\section{Action–Consequence Alignment }
We introduce the computational principle, Action–Consequence Alignment (ACA), which is used as a training objective to mitigate the prediction–planning mismatch for reliable planning. 
%ACA uses each observed transition as a factual reference, searches for nearby actions with lower prediction error, and directly penalizes their positive error advantage over the recorded action. This converts the comparison discussed in the previous section into a direct training principle.
\subsection{Factual--Counterfactual Prediction Comparison}
\label{sec:aca_comparison}

Consider an observed transition
$\tau=(x_t,a_t,x_{t+1})\sim\mathcal{D}$.
Keeping the current latent state $z_t$ and observed future
$z_{t+1}$ fixed, we define the prediction error under a
candidate action $a$ as
\begin{equation}
    E_\phi(a;\tau)
    =
    \left\|P_\phi(z_t,a)-z_{t+1}\right\|_2^2.
    \label{eq:aca_explanatory_error}
\end{equation}
ACA aggregates the prediction-error differences between factual and searched actions within each minibatch and penalizes a positive average advantage of the searched alternatives.Equal errors are permitted, allowing different actions to explain the same outcome.

This comparison has an ideal-model reference. If the conditioning variables determine the next latent state, an ideal predictor assigns zero error to the factual action, which is consequently a minimizer. Appendix~\ref{theory} relates expected prediction-error differences to differences between latent conditional means and clarifies the effect of conditional variability on individual transitions. ACA uses these factual–alternative comparisons as a local training surrogate motivated by the observed prediction–planning mismatch. The following subsection describes how we construct the alternative action efficiently.

\subsection{Searching for Alternative Actions}
\label{sec:aca_action_search}

Searching the full continuous action space for a challenging
alternative can be computationally expensive.
We therefore adapt the local perturbation framework of the traditional
Sharpness-Aware Minimization \cite{sam,li2024friendly,wang2026fix} to action-conditioned
prediction.
% Add the original SAM citation here.
SAM searches for loss increasing parameter perturbations
within a bounded neighborhood to improve the generalization ability of a model. We repurpose this framework for perturbing conditioning actions.
Starting from the factual action, ACA searches for a nearby alternative that decreases the prediction error for the same observed future.
The search identifies alternatives to which the current predictor assigns an error advantage, providing confusable candidates for the factual–alternative comparison.

Specifically, we define an action neighborhood centered at
the factual action:
\begin{equation}
    \mathcal{C}_\rho(a_t)
    =
    \left\{
        a\in\mathcal{A}:
        \|a-a_t\|_2\leq\rho
    \right\},
    \label{eq:aca_neighborhood}
\end{equation}
where $\rho>0$ controls the perturbation radius.
Within this neighborhood, the local search target is
\begin{equation}
    a_t^\star
    \in
    \arg\min_{a\in\mathcal{C}_\rho(a_t)}
    E_\phi(a;\tau).
    \label{eq:aca_inner_search}
\end{equation}

Even within this neighborhood, accurately solving the inner
problem can be difficult because the prediction error is
potentially nonconvex with respect to the action.
Repeating iterative optimization for every training transition
would also introduce substantial computational overhead.
Following the first-order inner approximation used in SAM,
we linearize the objective around the factual action and
construct the candidate using a single normalized
action-gradient step.

\begin{wrapfigure}{r}{0.4\textwidth}
    \centering
    \includegraphics[width=\linewidth]{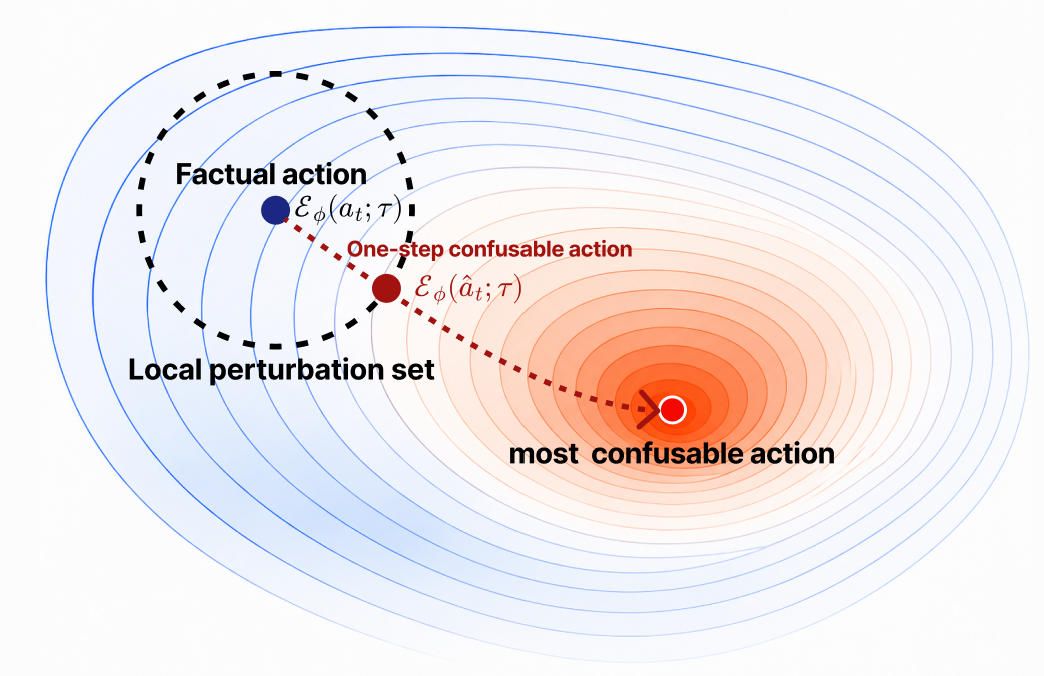}
    \caption{Illstration of the Action Consequence Alignment (ACA) principle.}
    \label{fig:landscape}
    \vspace{-2mm}
\end{wrapfigure}

Let
\(
    g_t
    =
    \left.
        \nabla_a E_\phi(a;\tau)
    \right|_{a=a_t}
    \label{eq:aca_action_gradient}
\)
denote the action gradient. 
We construct the searched alternative as
\begin{equation}
    \hat{a}_t
    =
    \operatorname{sg}
    \left[
        \Pi_{\mathcal{A}}
        \left(
            a_t-\rho
            \frac{g_t}{\|g_t\|_2+\epsilon}
        \right)
    \right],
    \label{eq:aca_searched_action}
\end{equation}
where $\Pi_{\mathcal{A}}$ projects the perturbed action onto
the valid action space and $\epsilon>0$ ensures numerical
stability.
The stop-gradient operator prevents differentiation through
the construction of $\hat{a}_t$, avoiding second-order
derivatives through $g_t$.
The prediction error $E_\phi(\hat{a}_t;\tau)$ remains
differentiable with respect to the encoder and predictor
parameters. This approximation requires a single gradient
computation with respect to the action, without an iterative
inner loop or additional environment interaction.
The resulting candidate is used in the ACA comparison
objective described next.

\subsection{ACA Objective}
\label{sec:aca_objective}

Given the searched alternatives, ACA compares their prediction errors with those of the factual actions within each minibatch. For a minibatch $\mathcal B$, we apply a hinge penalty to the
average factual-minus-alternative prediction-error difference:
\begin{equation}
\mathcal L_{\mathrm{ACA}}(\theta,\phi)
=
\mathbb E_{\mathcal B}
\left[
\left[
\frac{1}{|\mathcal B|}
\sum_{\tau\in\mathcal B}
\left(
E_\phi(a_t;\tau)-E_\phi(\hat a_t;\tau)
\right)
\right]_+
\right].
\end{equation}
The minibatch penalty is positive when the searched alternatives have a lower average prediction error than the factual actions.
We combine this term with the forward-prediction and anti-collapse objectives:
\begin{equation}
    \mathcal{L}(\theta,\phi)
    =
    \mathcal{L}_{\mathrm{pred}}(\theta,\phi)
    +
    \lambda_{\mathrm{anti}}\mathcal{L}_{\mathrm{anti}}
    +
    \lambda_{\mathrm{ACA}}
    \mathcal{L}_{\mathrm{ACA}}(\theta,\phi).
    \label{eq:aca_total_loss}
\end{equation}
The complete training procedure is summarized in Appendix~\ref{algorithm}.

The joint objective supervises both factual predictions and local relative errors. Forward prediction fits the observed consequence under the recorded action, while ACA penalizes
a positive minibatch-averaged error advantage of the searched alternatives. In this way, the same transition provides both a prediction target and a reference for action comparison. The resulting objective mitigates the prediction–planning gap by improving the usefulness of the learned predictor for goal-directed action selection.

\begin{table}[t]
    \centering
    \caption{Planning performance comparison on different environments. Success rates (\%) are reported as mean. The standard deviation is provided in Table~\ref{tab:planner_seed_sd}.
}
    \label{latentplanning}
    \renewcommand{\arraystretch}{1}
    \setlength{\tabcolsep}{12pt}
    
    \resizebox{1\linewidth}{!}{%
    \begin{tabular}{lcccccc}
        \toprule
        \textbf{Method}
        & \textbf{TwoRoom}
        & \textbf{TwoRoom*}
        & \textbf{Reacher}
        & \textbf{Push-T}
        & \textbf{\shortstack{OGBench-\\Cube}}
        & \textbf{\shortstack{OGBench-\\Cube*}} \\
        \midrule
        Random       & 30.0 & 2.2  & 14.0 & 3.0  & 43.0 & 5.4  \\
        Forward-only & 37.0 & 2.0  & 11.0 & 2.0  & 44.0 & 4.4  \\
        LeWM         & 94.0 & 73.8 & 67.0 & 81.0 & 59.0 & 30.8 \\
        SMWM         & 97.8 & 82.0 & 66.0 & 81.4 & 82.2 & 66.2 \\
        \midrule
        \textbf{LeWM+ACA}
        & $\mathbf{94.2}$
        & $\mathbf{79.4}$
        & $\mathbf{72.0}$
        & $\mathbf{83.2}$
        & $\mathbf{65.0}$
        & $\mathbf{34.0}$ \\
        \textbf{SMWM+ACA}
        & $\mathbf{98.8}$
        & $\mathbf{95.8}$
        & $\mathbf{71.4}$
        & $\mathbf{84.4}$
        & $\mathbf{85.6}$
        & $\mathbf{72.4}$ \\
        \bottomrule
    \end{tabular}%
    }
\end{table}

\section{Reliable Planning} \label{experiment}
% To evaluate the effectiveness of our ACA algorithm, we conduct experiments on actionable downstream control tasks.
% Our evaluation examines whether ACA improves goal-conditioned planning when added to end-to-end latent world models under a shared experimental protocol.
\subsection{Experiment Setup} 
\textbf{Environments and datasets.} 
We conduct experiment on four environments and offline datasets: TwoRoom \cite{PLDM}, Reacher \cite{Reacher}, Push-T \cite{pusht}, and OGBench-Cube \cite{park2025ogbench}, ranging from 2D navigation to articulated control and contact-rich manipulation. Noting that even random policies achieve relatively high success rates on OGBench-Cube and TwoRoom, which result in difficulties distinguishing the control capabilities of different methods, we adopt the Strict cube and TwoRoom protocol (OGBench-Cube*, TwoRoom*) in CLEAR-LeWM \cite{sun2026intactisomorphicintenttoactionlearning}. More details of the experiment datasets are given in Appendix~\ref{data}. 

\textbf{Architecture.} 
We adopt the same encoder and forward-prediction architecture following \cite{maes2026leworldmodel,inverse}. The encoder is a ViT tiny \cite{vit,vits} with around 5M parameters. The forward predictor \(P_\phi\) is a small transformer \cite{vaswani2017attention} with the action \(a_t\) injected via AdaLN-zero \cite{DIT}. 

\textbf{Training and planning protocol.}
To ensure fair comparison, we strictly follow the training an planning protocol of \cite{inverse}. All models are trained end-to-end using two-frame snippets and a single-step forward-prediction target. The datasets are loaded with a frame skip of five which the primitive actions within the skipped interval are concatenated to form a single model-level action. Every model is trained for 10 epochs, detailed hyper-parameters for training is shown in Appendix~\ref{experiment details}.  We evaluate the learned models through goal-conditioned trajectory optimization in the latent space. In the beginning of each planning episode, the agent will receive a current observation \(x_t\) and a goal observation \(x_g\). The frozen encoder will map them to \(z_c = f_\theta(x_t),  z_g = f_\theta(x_g).\) For each candidate action sequence,the frozen predictor is rolled out auto-regressively where:
\[
\hat{z}_{t+h+1}
=
P_\phi\!\left(\hat{z}_{t+h}, a_{t+h}\right),
\quad
\hat{z}_t = z_t.
\]
The candidate sequence is then evaluated using its terminal latent goal matching cost:
\[
\mathcal{C}(a_{t:t+H-1})
=
\left\|
\hat{z}_{t+H} - z_g
\right\|_2^2.
\]
The cost is optimized using the Cross-Entropy Method \cite{CEM}. 
All methods are evaluated with a fixed budget of 50 environment steps per episode to reach the goal. We evaluate each fixed trained checkpoint using five consecutive planner random seeds and report the mean success rate with standard deviation (SD). More detailed  experiment settings are in Appendix~\ref{experiment details}.

\subsection{Planning Performance}
\textbf{Baseline.} We compare ACA against various baselines under same experiment settings. Random uniformly samples action from the action space without 
using a learned world model; Foward-Only trains the encoder and the predictor with only the forward-prediction objective; LEWM \cite{maes2026leworldmodel} applies a Gaussian-matching regularization term to prevent representation collapse; SMWM \cite{inverse} uses an inverse dynamic model to prevent model collapse.   

\textbf{Results.} As shown in Table~\ref{latentplanning}, ACA improves both LeWorldModel and SMWM across the evaluated
environments and protocols. The improvements over SMWM indicate
that inverse-dynamics supervision leaves room for improving
downstream action selection through the forward predictor.
Under the shared planning protocol, adding ACA yields better
planning performance, supporting the value of explicitly training factual-counterfactual comparisons alongside factual prediction.

\begin{figure}[t]
    \centering
    \includegraphics[width=0.95\linewidth]{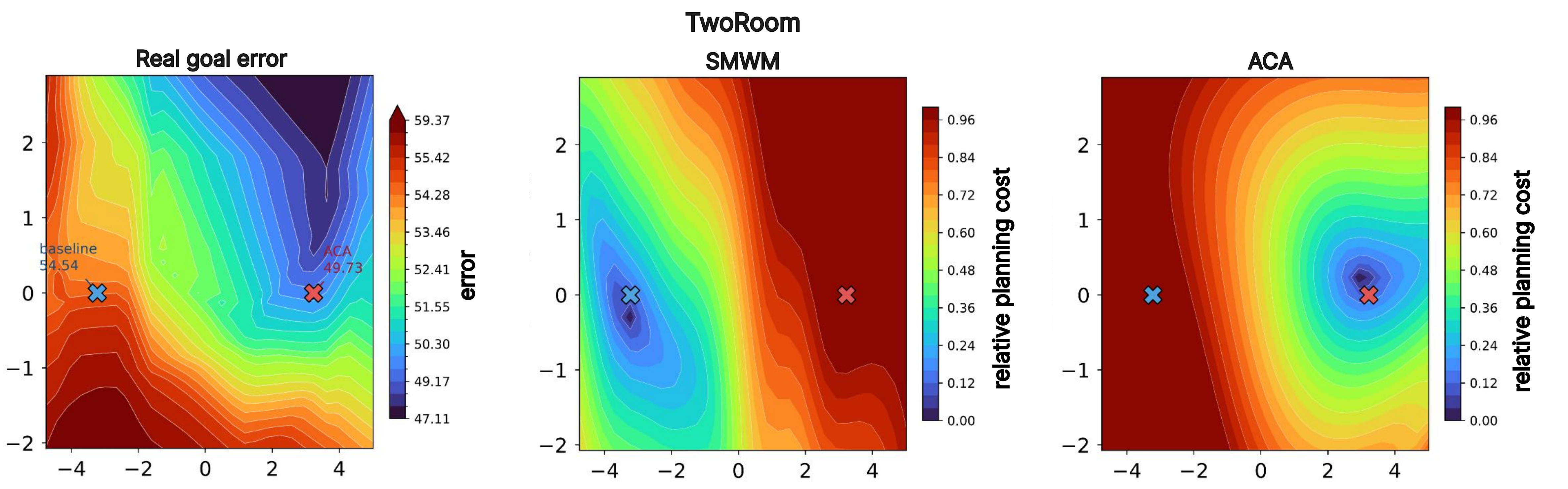}
    \caption{Visualization of planning-cost alignment on TwoRoom, more visualization results is shown in Appendix~\ref{app:visualizations}.}
    \label{cost_alignment}
\end{figure}

\subsection{Planning-Cost Alignment Analysis}

%Our diagnostic analysis identifies a mechanism by which explanatory inconsistency can mislead planning: the same predictor assigns prediction errors during training comparisons and goal-matching costs duringaction selection. We now examine whether training with ACA improves the resulting correspondence between model-based planning cost and real goal error. Given the trained baseline and ACA models, we run CEM under the same planning protocol and construct a shared two-dimensional slice over complete action plans. Each plan is evaluated using simulator rollouts and the planning costs of both models. Figure~\ref{cost_alignment} compares the real and model-based cost landscapes and shows that the ACA-selected plan achieves lower real goal error in the illustrated example. Figure~\ref{plotscore} further reports lower real goal errors for plans selected using ACA across the evaluated environments and base models. Together, these results provide empirical support for the proposed training--planning connection: incorporating action--consequence consistency into training improves the real outcomes of plans selected through the learned cost. Additional visualizations are provided in Appendix~\ref{experiment details}.
\begin{figure}[t]
    \centering
    \includegraphics[width=1\linewidth]{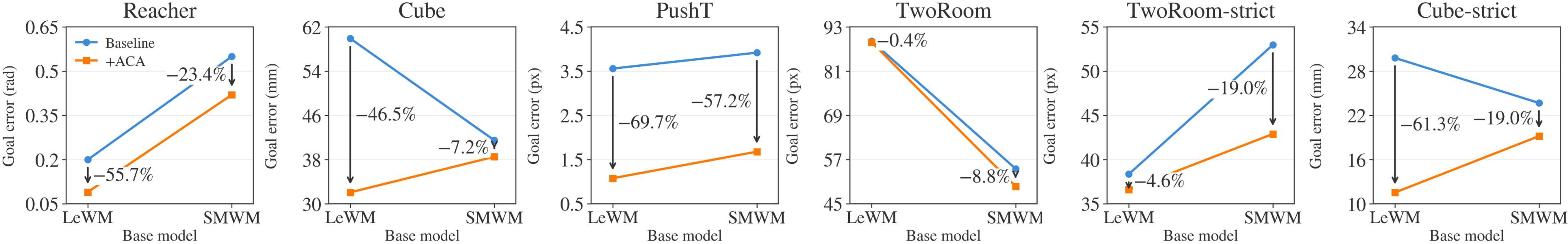}
    \caption{ACA substantially reduces the real planning error across environments, indicating improved alignment between theplanning cost and the actual consequences of executed actions.}
    \label{plotscore}
\end{figure}

The action perturbation experiment shows that lower prediction error can favor an alternative action with a different physical consequence. We now examine whether training with ACA improves the outcomes of actions selected through the trained models.
For the trained baseline and ACA models, we run CEM under the
same planning protocol and construct a shared two-dimensional
slice over complete action plans. The same plans are evaluated
using model-based planning costs and simulator rollouts.
Figure~\ref{cost_alignment} illustrates the discrepancy between
a model preference and the physical outcome of its selected
plan. The baseline favors a plan with low model cost even though its actual outcome has a larger goal error than the ACA-selected plan. The landscape visualization thus shows how optimizing a learned prediction-based objective can favor a physically inferior solution. In the illustrated example, training with ACA shifts the model preference toward a plan that achieves a better physical outcome.

Figure~\ref{plotscore} extends this evaluation across environments and both base models, showing lower real goal errors for plans selected using ACA-trained predictors. Together with the planning success rates, these results provide empirical support that local factual–counterfactual comparisons help mitigate the prediction–planning gap by improving the physical outcomes of selected plans. Additional visualizations are provided in Appendix~\ref{app:visualizations}.

 \subsection{Planning with limited training data}

We next examine whether ACA remains useful when training experience is limited, using only $5\%$ of the original training data. Restricted state–action coverage provides a challenging setting for transferring factual prediction learning to the candidate actions considered during planning. This experiment evaluates whether the local comparisons introduced by ACA improve downstream action selection under the same limited data budget.

We train models on Reacher for 20 epochs, Cube for
10 epochs, TwoRoom for 20 epochs and Push-T for 40 epochs to ensure convergence. We use cosine learning-rate annealing with an
initial learning rate of $10^{-4}$.
As shown in Figure~\ref{fewdata}, ACA outperforms LeWorldModel and SMWM
across the evaluated settings and reduces the performance
difference from their corresponding full-data baselines.
These results support the value of complementing factual prediction with action–consequence comparisons: with the same limited training data, ACA-trained models achieve better goal-directed planning on held-out tasks.

\begin{figure}[t]
    \centering
    \includegraphics[width=1\linewidth]{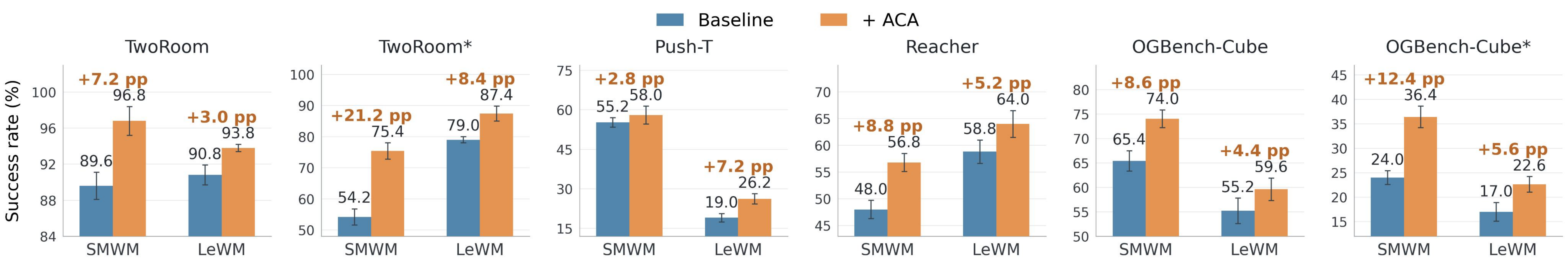}
    \caption{ACA achieves reliable planning under the limited data settings.}
    \label{fewdata}
\end{figure}

\subsection{Additional experiment}
 We provide hyperparameter sensitivity analysis (Appendix~\ref{experiment details}), additional planning-cost visualizations (Appendix~\ref{app:visualizations}), single-plan and long-horizon evaluations (Appendix~\ref{sec:additional_planning}), ablations of action search and regularization (Appendix~\ref{app:aca_ablation}), and variability across planner seeds (Appendix~\ref{appendixsd}).
\section{Self-Improving via the ACA principle}
%%%%todo: 引用和cube的结果

%Collecting action-labeled environment interactions often needs expensive computation with large time costs, thus acquiring informative transitions under a limited sampling budget remains an important problem in world model learning. 

\begin{wrapfigure}{r}{0.4\textwidth}
    \centering
    \includegraphics[width=\linewidth]{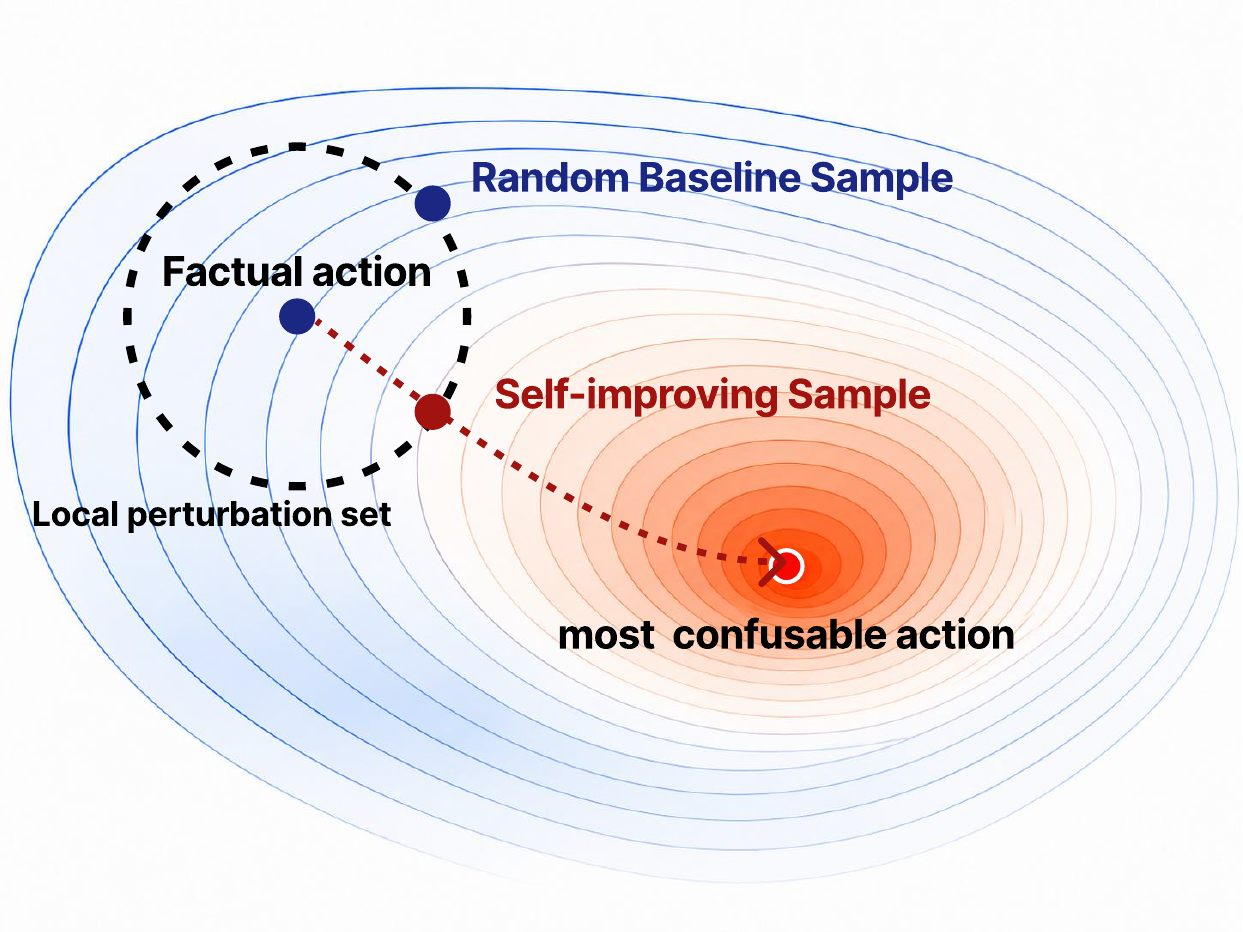}
    \caption{Overview of ACA self improving sampling method and random sampling baseline.}
    \label{fig:landscape}
    
\end{wrapfigure}
Identifying potential model failures can guide targeted data collection during training and support iterative self-improvement \cite{WAV}.
The ACA principle provides such a signal  by identifying nearby actions through searching for nearby alternatives with lower prediction error against a recorded future. We then execute these counterfactual actions from the corresponding simulator states, collect their actual outcomes, and augment the dataset for further training. This creates a self-improvement loop in which the local action preferences of the model guide the acquisition of additional experience.

We evaluate this procedure, starting from a model pretrained on $5\%$ of the original training data on Reacher and Cube datasets. We compare two sampling strategies using exactly the same reference-transition indices: the random baseline reuses the indices selected for self-improving sampling strategy. For each shared transition
$\tau=(x_t,a_t,x_{t+1})$, Self-improving sampling strategy follows the normalized negative gradient of the prediction error:
\begin{equation}
    a_t^{\mathrm{Self}}
    =
    \Pi_{\mathcal{A}}
    \left(
        a_t-\rho\frac{g_t}{\|g_t\|_2+\epsilon}
    \right),
    \quad
    g_t=
    \left.
    \nabla_a E_{\phi}(a;\tau)
    \right|_{a=a_t}.
    \label{eq:self_sampling}
\end{equation}
The random baseline instead perturbs the same factual action along a random unit direction $u_t$, using the same perturbation scale and action-space projection:
\begin{equation}
    a_t^{\mathrm{Random}}
    =
    \Pi_{\mathcal{A}}\left(a_t+\rho u_t\right),
    \quad
    \|u_t\|_2=1,
    \label{eq:random_sampling}
\end{equation}
where $u_t$ is sampled without using the prediction-error
gradient. For both strategies, the selected action is executed from the same reference simulator state, and its actual resulting observation is collected to construct a new training transition. For all models and datasets, we use a constant learning rate of \(10^{-4}\). Both base models are trained for 25 epochs on Reacher and 30 epochs on OGBench-Cube to ensure convergence. For both sampling strategies, additional transitions are collected at epochs 5 and 10, with each sampling round containing a number of samples equal to 10\% of the initial training subset. To ensure a fair comparison, the two strategies use exactly the same indices of initial simulator states and differ only in the perturbation directions applied to the corresponding factual actions.

\iffalse
\begin{wrapfigure}{r}{0.45\textwidth}
    \centering
    \includegraphics[width=\linewidth]{figure/self-improving-result.pdf}
    \caption{Self-improving results}
    \label{selfreacher}
\end{wrapfigure}
\fi

As shown in Fig.~\ref{selfreacher}, ACA-guided sampling consistently achieves higher planning performance than random local sampling, and also surpasses the reference model trained without additional data. Because both sampling strategies use the same initial-state indices, sampling budgets, and perturbation scale, their difference isolates the effect of the selected perturbation direction. These results suggest that executing alternatives favored by the current predictor can provide useful additional action–consequence supervision. Under the evaluated sampling protocol, this directed collection improves subsequent planning more than random local perturbations.

%More experiment results on cube and details is shown in Appendix~\ref{experiment details}. 

\section{Related work} \label{related work}

\textbf{Latent world models} \quad
World models learn compact dynamics for prediction and control. Early work studied control through learned representations \cite{worldmodels}, latent-space planning \cite{hafner2019learning}, and imagined trajectories \cite{hafner2019dream}. Predictive representation learning further reduces reliance on reconstruction: CPC \cite{oord2018representation} uses contrastive future prediction, while I-JEPA \cite{ijepa} and V-JEPA \cite{vjepa} predict target embeddings directly. DINO-WM \cite{dinowm}, V-JEPA 2 \cite{assran2025vJEPA-2}, and PLDM \cite{PLDM} extend latent prediction to planning and control. For end-to-end learning, VICReg \cite{bardes2021vicreg} and LeJEPA \cite{balestriero2025lejepa} introduce anti-collapse regularization, while LeWorldModel combines such regularization with latent dynamics learning from pixels \cite{maes2026leworldmodel}. 
Garrido et al.~\cite{garrido2026learning} learn continuous latent actions from in-the-wild videos and connect known controls to the learned action space for planning.
CoLA-World~\cite{wang2025co} enables joint learning of latent action and world models through a representation warm-up that mitigates collapse.
DiLA~\cite{zhang2026dila} separates dynamics-relevant structure from visual content and models latent-action-conditioned transitions in structure space.
CLAW~\cite{ayalew2026claw} jointly learns continuous latent actions and a world model from action-free videos using adversarial latent regularization.
%Our work instead focuses on the relative prediction errors assigned to factual and alternative actions.

\textbf{Action--consequence dependence} \quad
Internal forward models predict sensory consequences from motor commands and states \cite{wolpert1995internal,miall1996forward}, while self-produced touch experiments highlight movement--sensation correspondence \cite{blakemore1999spatio}. Forward and inverse dynamics have also been used for representation learning \cite{agrawal2016learning,pathak2017curiosity}. Recent latent world models strengthen action dependence through inverse dynamics in SMWM \cite{inverse}, latent difference decoding in Delta-JEPA \cite{zhang2026delta}, action-conditioned consistency in CoCo \cite{coco}, action-dependent decomposition in Dueling World Models \cite{li2026dueling}, and inverse-dynamics-aware planning costs in ACID \cite{seo2026acid}. Related theory studies identifiability through conditional action variation \cite{zhang2026identifiability}. Wang et al.~\cite{wang2026predicting} train a latent compatibility predictor using counterfactual action sequences for goal-directed navigation. %ACA is complementary: it directly penalizes cases where a nearby alternative action explains the factual future better than the factual action, without additional model components or predefined action transformations.

\begin{figure}[t]
    \centering
    \includegraphics[width=1\linewidth]{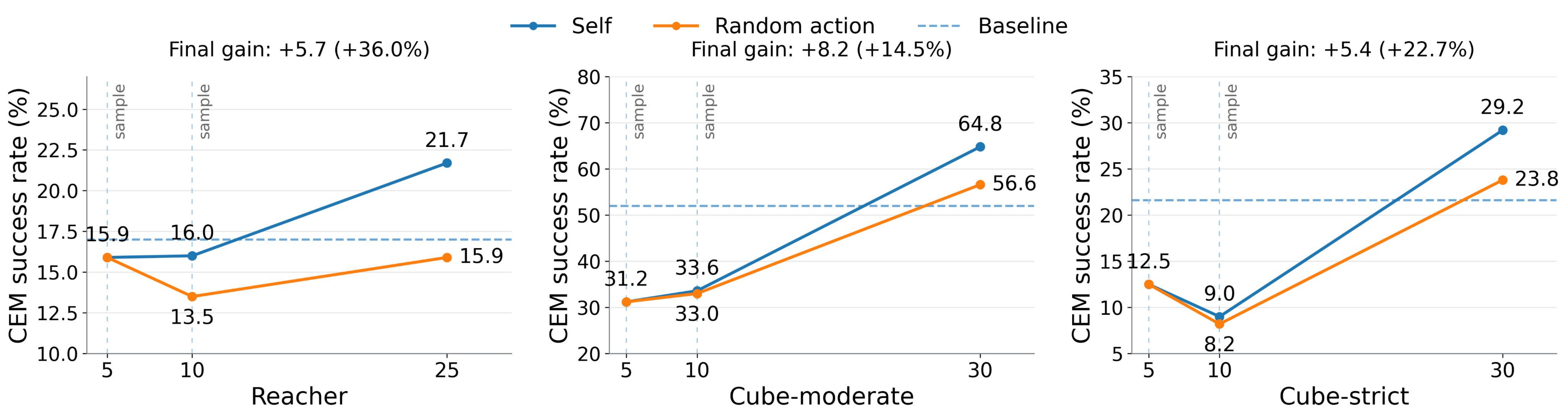}
    \caption{ACA sampling enables better self-improved performance than random sampling.}
    \label{selfreacher}
\end{figure}

\section{Conclusion}
We studied the prediction–planning gap in latent world models through the action preferences induced by their forward predictors. We showed that alternative actions can receive lower prediction errors than the factual action while producing outcomes farther from the target, indicating that accurate prediction alone does not guarantee reliable planning. To address this, we introduced Action–Consequence Alignment (ACA), which explicitly trains local action comparisons using factual transitions as references. ACA improves planning performance, reduces real goal error, and remains effective with limited data. The same inconsistency signal can further guide self-improving data collection, enabling the model to acquire more informative experience than random local sampling. Together, these results establish action–consequence alignment as a practical principle for both reliable planning and self-improvement.

\clearpage
\bibliographystyle{unsrt}
\bibliography{references}

\clearpage

\appendix

% Place after \appendix and before Appendix A.
\clearpage
\section*{Appendix Contents}

\begingroup
\large
\setlength{\parindent}{0pt}

% Main section entry
\newcommand{\appentry}[3]{%
    \par\addvspace{0.9em}%
    \noindent
    \textbf{#1\quad #2}\nobreak
    \dotfill\nobreak
    \pageref{#3}\par
}

% Subsection entry
\newcommand{\appsubentry}[3]{%
    \par\addvspace{0.45em}%
    \noindent\hspace*{1.5em}%
    #1\quad #2\nobreak
    \dotfill\nobreak
    \pageref{#3}\par
}

\appentry{A}
    {Algorithm}
    {algorithm}

\appentry{B}
    {Local Action Comparisons and Their Physical Consequences}
    {app:ordering_reversals}

\appentry{C}
    {Theoretical Reference for Action Comparisons}
    {theory}

\appentry{D}
    {Datasets}
    {data}

\appentry{E}
    {Experiment Details}
    {experiment details}

\appsubentry{E.1}
    {Architecture}
    {app:architecture}

\appsubentry{E.2}
    {Training and Planning Details}
    {app:training-planning}

\appsubentry{E.3}
    {Hyper-parameter Selections and Sensitive Analysis for ACA}
    {app:hyperparameters}

\appentry{F}
    {Extra Visualization Results}
    {app:visualizations}

\appentry{G}
    {Performance Under Additional Planning Settings}
    {sec:additional_planning}

\appentry{H}
    {Ablation of Action Search and Regularization}
    {app:aca_ablation}

\appentry{I}
    {Variability Across Planner Seeds}
    {appendixsd}
\appentry{J}
    {Discussion}
    {discussion}
\appentry{K}
    {Scope and Limitation}
    {scope}
\endgroup
\clearpage

\section{Algorithm} \label{algorithm}
\begin{algorithm}[t]
\caption{Action--Consequence Alignment (ACA)}
\label{alg:aca}
\begin{algorithmic}[1]
\Require Dataset $\mathcal D$, encoder $f_\theta$,
predictor $P_\phi$, perturbation radius $\rho$,
coefficients $\lambda_{\mathrm{anti}},\lambda_{\mathrm{ACA}}$,
stabilizer $\epsilon$

\For{each minibatch $\mathcal B\sim\mathcal D$}
    \State Encode observations for each
    $\tau=(x_t,a_t,x_{t+1})\in\mathcal B$:
    $z_t=f_\theta(x_t)$, $z_{t+1}=f_\theta(x_{t+1})$

    \State Define prediction error:
    $E_\phi(a;\tau)=\|P_\phi(z_t,a)-z_{t+1}\|_2^2$

    \State Compute action gradient:
    $g_t=\left.\nabla_a E_\phi(a;\tau)\right|_{a=a_t}$

    \State Estimate the most confusable action:
    $\hat a_t=\operatorname{sg}\!\left[
    \Pi_{\mathcal A}\!\left(
    a_t-\rho\frac{g_t}{\|g_t\|_2+\epsilon}
    \right)\right]$

    \State Compute ACA loss:
    $\mathcal L_{\mathrm{ACA}}
    =\left[
    \frac{1}{|\mathcal B|}
    \sum_{\tau\in\mathcal B}
    \left(
    E_\phi(a_t;\tau)-E_\phi(\hat a_t;\tau)
    \right)
    \right]_+$

    \State Compute total objective:
    $\mathcal L
    =\mathcal L_{\mathrm{pred}}
    +\lambda_{\mathrm{anti}}\mathcal L_{\mathrm{anti}}
    +\lambda_{\mathrm{ACA}}\mathcal L_{\mathrm{ACA}}$

    \State Update model parameters by minimizing $\mathcal L$
\EndFor
\end{algorithmic}
\end{algorithm}
\section{Local Action Comparisons and Their Physical Consequences}
\label{app:ordering_reversals}
% Preserve the original label of Appendix B here, if different.

We examine the local manifestation of the prediction–planning gap through two complementary evaluations. In this section, we measure how frequently a locally searched action receives lower prediction error than the factual action. we then execute lower-error alternatives and measure how far their outcomes deviate from the corresponding factual outcomes. Together, these evaluations connect the predictor relative action preferences with physical attainment of the factual target.

\paragraph{Lower-error alternatives.}
We evaluate frozen trained baseline world models, SMWM and LeWM, on
$N=10000$ valid one-step transitions sampled uniformly from the training data of each environment.
For a transition $\tau=(x_t,a_t,x_{t+1})$, we encode the observations as
$z_t=f_\theta(x_t)$ and $z_{t+1}=f_\theta(x_{t+1})$, keep both
representations fixed, and the transition-wise prediction error can be defined as
\begin{equation}
    E_\phi(a;\tau)
    =
    \left\|P_\phi(z_t,a)-z_{t+1}\right\|_2^2.
    \label{eq:appendix_explanatory_error}
\end{equation}
Starting from the factual action, we construct an alternative using
one normalized gradient-descent step:
\begin{equation}
    a_{\mathrm{cf}}^{(\rho)}
    =
    \Pi_{\mathcal A}
    \left(
        a_t
        -
        \rho
        \frac{
            \left.\nabla_a E_\phi(a;\tau)\right|_{a=a_t}
        }{
            \left\|
                \left.\nabla_a E_\phi(a;\tau)\right|_{a=a_t}
            \right\|_2
        }
    \right),
    \qquad
    \rho\in\{0.05,0.10,0.25\},
    \label{eq:appendix_counterfactual_search}
\end{equation}
where 
$\Pi_{\mathcal A}$ enforces the action bounds. The radius $\rho$ specifies the nominal step length; projection can
reduce the actual displacement.
For each radius, the candidate is constructed independently from the
factual action, rather than through successive updates.

A searched alternative receives lower prediction error when
\begin{equation}
    E_\phi\!\left(a_{\mathrm{cf}}^{(\rho)};\tau\right)
    <
    E_\phi(a_t;\tau).
    \label{eq:appendix_reversal_condition}
\end{equation}
We report the frequency of lower error alternatives:
\begin{equation}
    \widehat p_{\mathrm{lower}}(\rho)
    =
    \frac{1}{N}
    \sum_{i=1}^{N}
    \mathbf{1}
    \left[
        E_\phi\!\left(
            a_{\mathrm{cf},i}^{(\rho)};\tau_i
        \right)
        <
        E_\phi(a_i;\tau_i)
    \right].
    \label{eq:appendix_reversal_rate}
\end{equation}

\begin{figure}[tbp]
    \centering
    % Replace with the existing Figure 9 file path.
    \includegraphics[width=\linewidth]{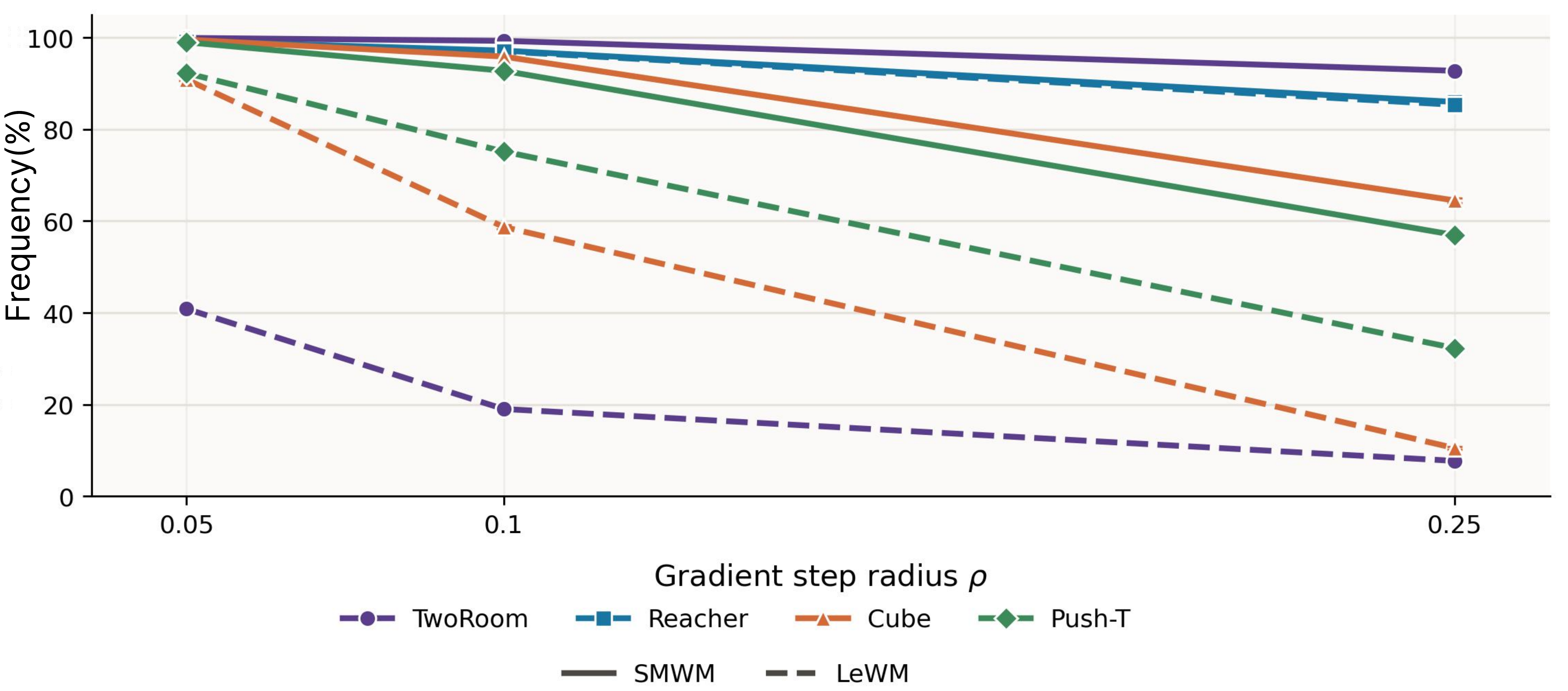}
    \caption{
        \textbf{Frequency of lower-error alternatives.}
        Frozen SMWM and LeWM models are evaluated on 10,000 transitions
        per environment.
        At each nominal perturbation radius, an alternative action is
        constructed using one normalized gradient-descent step from
        the factual action.
        The plotted frequency measures the fraction of transitions for
        which this alternative receives lower prediction error
        against the same observed future.
    }
    \label{fig:ordering_reversal_rates}
    % Preserve the original Figure 9 label here, if different.
\end{figure}

Figure~\ref{fig:ordering_reversal_rates} shows that lower-error
alternatives remain detectable as the nominal perturbation radius
increases from $0.05$ to $0.25$.
Although these frequencies generally decrease with increasing radius,
substantial rates persist in several model-environment settings at
the largest tested radius.
Thus, the observed lower-error alternatives are not confined to the smallest tested step length.
These statistics characterize the action preferences of the predictor;

\paragraph{Physical outcomes of lower-error alternatives.}
We extend the controlled execution example in the main text to
TwoRoom, Reacher, OGBench-Cube, and Push-T, using the same trained frozen baseline models discussed in this section.
For each model, environment, and perturbation radius, we evaluate
1000 transitions whose searched alternatives satisfy
Equation~\ref{eq:appendix_reversal_condition}.
For each evaluated transition, we restore the corresponding complete
simulator state $s_t$ and execute the alternative action over the
same action horizon as the factual transition.

Let $T(s_t,a)$ denote the simulator state reached after executing
action $a$ over this horizon, and let $s_{t+1}$ denote the recorded
factual outcome.
We measure the physical outcome deviation as
\begin{equation}
    D_{\mathrm{phys}}(a;\tau)
    =
    d_{\mathrm{env}}
    \left(
        T(s_t,a),\,s_{t+1}
    \right),
    \label{eq:appendix_physical_deviation}
\end{equation}
where $d_{\mathrm{env}}$ is an environment-specific distance computed
from simulator state variables.
The reported distances are expressed in pixels for TwoRoom and
Push-T, radians for Reacher, and millimeters for OGBench-Cube.
They are computed independently of the learned latent representations.
% Specify the exact state components and distance aggregation used
% by the evaluation code for each environment.

For each model and environment, let $\mathcal V_\rho$ index the
evaluated transitions with lower-error alternatives at radius $\rho$.
The plotted mean is
\begin{equation}
    \overline D_{\mathrm{phys}}(\rho)
    =
    \frac{1}{|\mathcal V_\rho|}
    \sum_{i\in\mathcal V_\rho}
    D_{\mathrm{phys}}
    \left(
        a_{\mathrm{cf},i}^{(\rho)};\tau_i
    \right).
    \label{eq:appendix_mean_physical_deviation}
\end{equation}
This quantity measures the magnitude of outcome deviation among
the evaluated lower-error alternatives, complementing the frequency of lower-error alternatives in Equation~\ref{eq:appendix_reversal_rate}.

\begin{figure}[t]
    \centering
    % Replace with the new physical-outcome figure file path.
    \includegraphics[width=\linewidth]{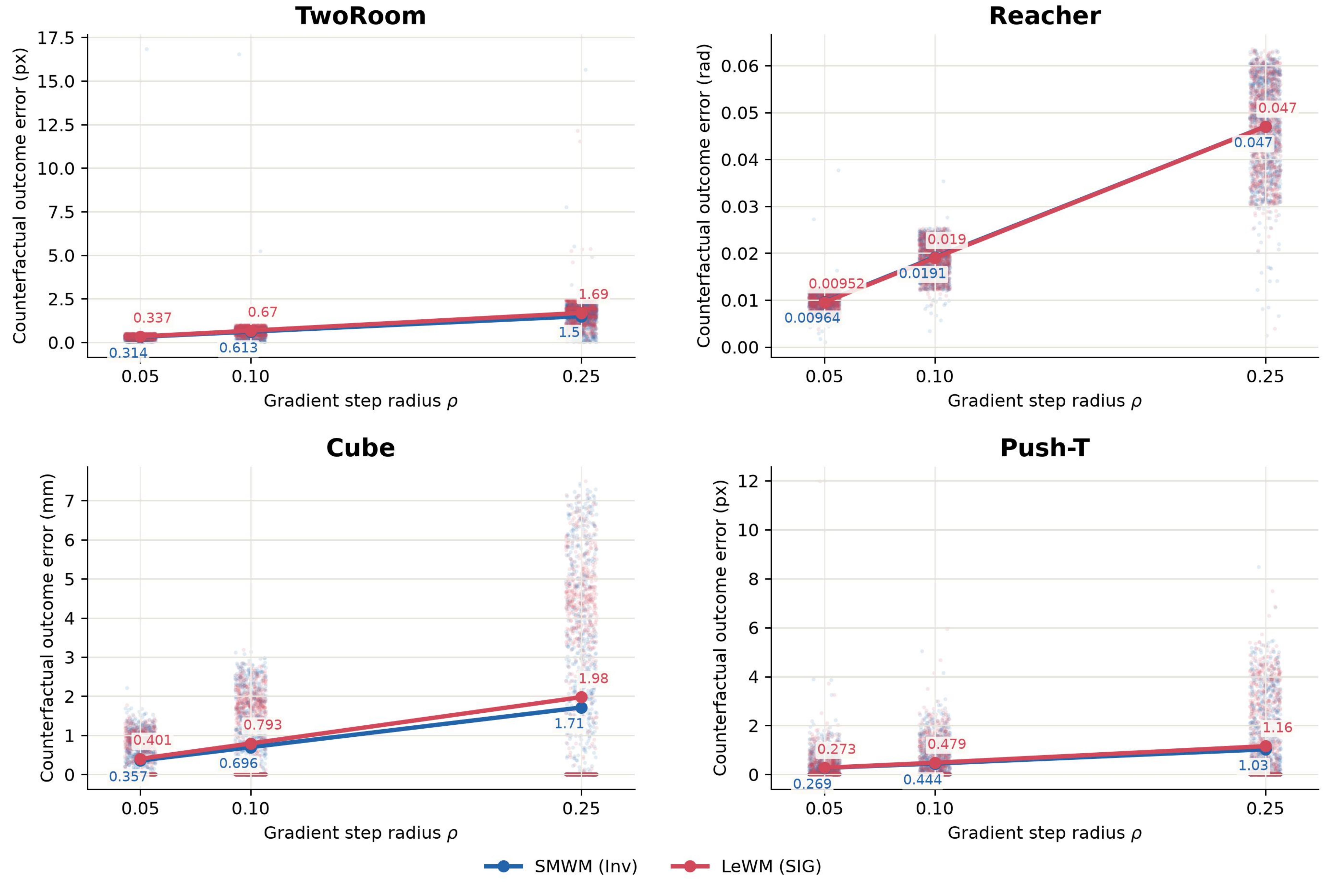}
    \caption{
        \textbf{Physical outcome deviations of lower-error
        counterfactual actions.}
        We execute searched alternatives that receive lower
        prediction error than the factual actions, starting from
        the corresponding recorded simulator states.
        Each model--radius setting contains 1,000 evaluated
        transitions.
        Translucent dots show individual outcome deviations from
        the factual targets, and solid lines connect their means.
        Blue and red denote SMWM and LeWM, respectively.
        Outcome deviations are measured using environment-specific
        simulator state distances, with units indicated on the axes.
        The alternatives receive lower model errors despite
        producing nonzero mean deviations from the factual outcomes.
    }
    \label{fig:counterfactual_outcome_deviation}
\end{figure}

Figure~\ref{fig:counterfactual_outcome_deviation} shows nonzero mean
outcome deviations across all evaluated environments, models, and perturbation radius. The mean deviations increase at larger tested radius, while the
individual transitions exhibit heterogeneous responses.
These measurements show that alternatives favored by the predictor
can produce outcomes that depart from the factual target across
multiple environments.
The relevant discrepancy is the coexistence of lower prediction
error and physical outcome deviation: the predictor assigns a
better score or cost to an alternative that does not reproduce the
observed consequence.

\paragraph{Implications for action selection.}
The recorded future also defines a goal against which candidate
actions can be compared.
When replaying the factual action from $s_t$ reproduces the recorded
transition, $D_{\mathrm{phys}}(a_t;\tau)=0$.
A lower-error alternative with positive physical deviation then
satisfies
\begin{equation}
    E_\phi\!\left(a_{\mathrm{cf}}^{(\rho)};\tau\right)
    <
    E_\phi(a_t;\tau),
    \qquad
    D_{\mathrm{phys}}
    \left(a_{\mathrm{cf}}^{(\rho)};\tau\right)
    >
    D_{\mathrm{phys}}(a_t;\tau).
    \label{eq:appendix_preference_execution_mismatch}
\end{equation}
These inequalities describe a disagreement between the model’s action preference and actual attainment of the factual target. While local descent induced by prediction residuals offers a possible explanation for lower-error alternatives under small perturbations, Figure~\ref{fig:ordering_reversal_rates}  shows that these alternatives persist as the nominal perturbation radius increases from 0.05 to 0.25, with substantial frequencies in several model–environment settings even at the largest tested radius. Figure~\ref{fig:counterfactual_outcome_deviation} further shows that their mean physical outcome deviations increase with the tested radius. Thus, the observed mismatch is not confined to the smallest tested perturbation. Even when such preferences originate from prediction residuals, assigning a lower model error to an action that achieves the target less accurately can misdirect action selection, constituting a concrete manifestation of the prediction–planning gap.

%The two diagnostics characterize complementary aspects of this discrepancy. Figure~\ref{fig:ordering_reversal_rates} measures the frequency of lower-error alternatives under the specified search procedure, while Figure~\ref{fig:counterfactual_outcome_deviation} measures their physical outcome deviations. The planning experiments in the main text further evaluate the practical effect of training with ACA. Together, these results connect explanatory action preferences, their consequences under execution, and the planning benefits of training with factual--counterfactual comparisons.

\section{Theoretical Reference for Action Comparisons}
\label{theory}
For latent action conditioned world model, action selection depends on how prediction error varies across candidate actions. This section characterizes that dependence for an ideal squared-error predictor and examines how conditional variability affects transition-level comparisons, providing a theoretical reference for the local training objective used by ACA.

\paragraph{Setup.}
Fix the encoder $f_\theta$ and a current latent state $z_t$.
For each action under consideration, we assume that the
conditional distribution $p(Z^+\mid z_t,a)$ is well-defined
and has a finite second moment. We then define its conditional mean and the corresponding ideal predictor as
\[
\mu(z_t,a)
=
\mathbb{E}[Z^+\mid z_t,a],
\qquad
P^\star(z_t,a)=\mu(z_t,a).
\]
For a factual action $a_t$, evaluate every candidate action
against the same factual future distribution:
\[
\overline{E}^{\star}(a;z_t,a_t)
=
\mathbb{E}_{Z^+\sim p(\cdot\mid z_t,a_t)}
\left[
\|P^\star(z_t,a)-Z^+\|_2^2
\right].
\]
In this definition, the candidate action is fixed while
the factual future is averaged over.

\begin{theorem}[Expected prediction-error decomposition]
\label{thm:expected_ordering}
Under the above assumptions, every candidate action $a$
satisfies
\[
\overline{E}^{\star}(a;z_t,a_t)
-
\overline{E}^{\star}(a_t;z_t,a_t)
=
\|\mu(z_t,a)-\mu(z_t,a_t)\|_2^2
\geq 0.
\]
Consequently, the factual action is a minimizer of the
expected prediction error. Equality holds if and only if
the two actions have the same conditional mean in the
latent representation.
\end{theorem}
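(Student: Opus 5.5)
The plan is the standard bias--variance decomposition of squared error around a conditional mean. Write $\mu_a=\mu(z_t,a)$ and $\mu_t=\mu(z_t,a_t)$. Let $Z^+\sim p(\cdot\mid z_t,a_t)$; by the finite second moment assumption, $\mathbb{E}\|Z^+\|_2^2<\infty$. This guarantees that every expectation below is finite, and that subtracting one from another is legitimate.

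The first step is to fix an arbitrary candidate $a$ and expand $\|\mu_a-Z^+\|_2^2$ by inserting $\mu_t$:
\[
\|\mu_a-Z^+\|_2^2=\|\mu_a-\mu_t\|_2^2+2\langle \mu_a-\mu_t,\ \mu_t-Z^+\rangle+\|\mu_t-Z^+\|_2^2 .
\]
Taking expectations over $Z^+\sim p(\cdot\mid z_t,a_t)$ makes the cross term vanish. This works because $\mu_a-\mu_t$ is deterministic (once $z_t$ and the two actions are fixed) and $\mathbb{E}[Z^+]=\mu_t$ by definition of the conditional mean under the factual action. The cross term is integrable by Cauchy--Schwarz together with the finite second moment.

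What remains is
\[
\overline{E}^{\star}(a;z_t,a_t)=\|\mu_a-\mu_t\|_2^2+\mathbb{E}\|\mu_t-Z^+\|_2^2 .
\]
Specializing to $a=a_t$ shows that the last term equals $\overline{E}^{\star}(a_t;z_t,a_t)$, which is the conditional variance. Subtracting the two expressions gives the claimed identity. The remaining assertions follow directly from it: nonnegativity of the norm shows that $a_t$ minimizes $\overline{E}^{\star}(\cdot;z_t,a_t)$, and equality holds exactly when $\|\mu_a-\mu_t\|_2=0$, that is, when $\mu(z_t,a)=\mu(z_t,a_t)$.

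I do not expect a genuine obstacle here. The only point needing care is to state clearly that the expectation is always taken under the factual distribution $p(\cdot\mid z_t,a_t)$, even when the predictor is evaluated at $a$. That is why it is $\mu_t$, not $\mu_a$, that cancels the cross term. The integrability needed for the subtraction also has to be checked, and that is immediate from the second-moment assumption.
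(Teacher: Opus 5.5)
Your proposal is correct and follows essentially the same route as the paper, which writes $Z^+=\mu(z_t,a_t)+\xi$ with $\mathbb{E}[\xi\mid z_t,a_t]=0$, expands $\|\Delta(a)-\xi\|_2^2$, and drops the cross term, exactly as you do. Your explicit integrability check via the finite-second-moment assumption is a small addition that the paper leaves implicit.
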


\begin{proof}
Write
\[
Z^+=\mu(z_t,a_t)+\xi,
\qquad
\mathbb{E}[\xi\mid z_t,a_t]=0,
\]
and let
$\Delta(a)=\mu(z_t,a)-\mu(z_t,a_t)$.
Expanding the squared error gives
\begin{align*}
\overline{E}^{\star}(a;z_t,a_t)
&=
\mathbb{E}
\left[
\|\Delta(a)-\xi\|_2^2
\mid z_t,a_t
\right] \\
&=
\|\Delta(a)\|_2^2
+
\mathbb{E}
\left[
\|\xi\|_2^2
\mid z_t,a_t
\right].
\end{align*}
The second term equals
$\overline{E}^{\star}(a_t;z_t,a_t)$,
which proves the result.
\end{proof}

\paragraph{Equivalent actions and non-unique minima.}
The ideal expected error can have multiple minimizing actions. Action redundancy, actuator saturation, and
non-injective dynamics are compatible with the result: actions with the same conditional mean receive the same
expected prediction error. The zero-margin ACA objective also assigns zero penalty to equal errors. Thus, the
comparison does not require recovering a unique action from an observed consequence. Under stochastic transitions, equality of conditional means does not imply equality of
the full transition distributions; the result concerns the conditional means elicited by squared prediction loss.

\begin{corollary}[Prediction-error decomposition for conditionally deterministic transitions]
\label{cor:transition_ordering}
Suppose that the factual latent future is conditionally
deterministic:
\[
Z^+=\mu(z_t,a_t)
\quad\text{almost surely given }(z_t,a_t).
\]
For any resulting transition $\tau$, define
\[
E^\star(a;\tau)
=
\|P^\star(z_t,a)-z_{t+1}\|_2^2.
\]
Then, simultaneously for all candidate actions,
\[
E^\star(a;\tau)-E^\star(a_t;\tau)
=
\|\mu(z_t,a)-\mu(z_t,a_t)\|_2^2
\geq 0.
\]
\end{corollary}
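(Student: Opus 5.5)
The corollary should follow from Theorem~\ref{thm:expected_ordering} once we observe that, under conditional determinism, the transition-level error coincides with its expectation. The plan is a direct pathwise computation, which avoids any limiting or measure-theoretic argument beyond the almost-sure statement.

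\textbf{Step 1: identify the realized future with the conditional mean.} Fix $(z_t,a_t)$ and a resulting transition $\tau$ whose latent future $z_{t+1}$ is a realization of $Z^+\sim p(\cdot\mid z_t,a_t)$. By the assumption $Z^+=\mu(z_t,a_t)$ almost surely, we have $z_{t+1}=\mu(z_t,a_t)$ for every transition outside a null set. The phrase ``any resulting transition'' is interpreted as any transition in the support, i.e.\ outside this null set. Equivalently, in the notation of the proof of Theorem~\ref{thm:expected_ordering}, the noise satisfies $\xi=0$ almost surely.

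\textbf{Step 2: evaluate both errors.} Substitute $P^\star(z_t,a)=\mu(z_t,a)$ into $E^\star$. This gives
\[
E^\star(a;\tau)=\|\mu(z_t,a)-\mu(z_t,a_t)\|_2^2,
\qquad
E^\star(a_t;\tau)=\|\mu(z_t,a_t)-\mu(z_t,a_t)\|_2^2=0 .
\]
Subtracting yields the claimed identity, and nonnegativity is immediate from the norm.

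Alternatively, the claim follows from Theorem~\ref{thm:expected_ordering}. Since $\xi=0$ almost surely, $\overline{E}^\star(a;z_t,a_t)=E^\star(a;\tau)$ for every candidate $a$, and $\mathbb{E}[\|\xi\|_2^2\mid z_t,a_t]=0$. The theorem's identity then transfers verbatim to the transition-level errors.

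\textbf{Main subtlety: ``simultaneously for all candidate actions.''} The null set on which $z_{t+1}\neq\mu(z_t,a_t)$ must not depend on $a$. This holds because the exceptional event concerns only $Z^+$ given $(z_t,a_t)$, while the candidate $a$ enters only through the deterministic function $\mu(z_t,\cdot)$. Consequently, a single null set serves for all $a$ at once, even over an uncountable action space, and no union bound is needed.
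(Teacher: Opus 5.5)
Your proposal is correct and takes essentially the same route as the paper, whose proof is exactly your Step 2: substitute $z_{t+1}=\mu(z_t,a_t)$ (with $P^\star=\mu$) into the definition of $E^\star$. Your two additions are accurate and are left implicit in the paper: the observation that the exceptional null set does not depend on $a$, and the alternative derivation via Theorem~\ref{thm:expected_ordering} with $\xi=0$ almost surely.
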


\begin{proof}
Substitute $z_{t+1}=\mu(z_t,a_t)$ into the definition
of $E^\star$.
\end{proof}

The corollary requires that the factual latent future be
determined by the predictor conditioning variables.
A deterministic simulator alone does not establish this
condition: partial observations or the encoder may merge
states with different future outcomes. A sufficient
condition is that the conditioning representation retains
the information needed to determine the next latent state
under the action. Local action injectivity is unnecessary.

\begin{lemma}[Local bound on variability-induced error advantages]
\label{lem:variability-bound}
For each transition $\tau=(x_t,a_t,x_{t+1})$ under consideration,
suppose $\mu(z_t,\cdot)$ is $L$-Lipschitz on
$\mathcal C_\rho(a_t)$, with a common constant $L$.
Let
\[
\xi_\tau=z_{t+1}-\mu(z_t,a_t).
\]
For every $a\in\mathcal C_\rho(a_t)$,
\[
\left[E^\star(a_t;\tau)-E^\star(a;\tau)\right]_+
\le 2L\rho\|\xi_\tau\|_2.
\]
Consequently, for any minibatch $\mathcal B$ and searched actions
$\hat a_t\in\mathcal C_\rho(a_t)$,
\[
\begin{aligned}
\widehat{\mathcal L}_{\mathrm{ACA}}^\star(\mathcal B)
&:=
\left[
\frac{1}{|\mathcal B|}
\sum_{\tau\in\mathcal B}
\left(
E^\star(a_t;\tau)-E^\star(\hat a_t;\tau)
\right)
\right]_+
\\
&\le
\frac{2L\rho}{|\mathcal B|}
\sum_{\tau\in\mathcal B}\|\xi_\tau\|_2.
\end{aligned}
\]
These bounds also hold for alternatives selected using the
observed transitions, provided that each selected action remains
within its corresponding neighborhood.
\end{lemma}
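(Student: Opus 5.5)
We expand both squared errors around the factual conditional mean and reduce the bound to Cauchy--Schwarz plus the Lipschitz assumption.

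\textbf{Decomposition.} Fix a transition $\tau$ and write $z_{t+1}=\mu(z_t,a_t)+\xi_\tau$. Set $\Delta(a)=\mu(z_t,a)-\mu(z_t,a_t)$. Since $P^\star=\mu$, we have $E^\star(a_t;\tau)=\|\xi_\tau\|_2^2$ and $E^\star(a;\tau)=\|\Delta(a)-\xi_\tau\|_2^2$. This is the same algebra as in the proof of Theorem~\ref{thm:expected_ordering}, except that no expectation is taken, so the cross term survives:
\[
E^\star(a_t;\tau)-E^\star(a;\tau)=2\langle\Delta(a),\xi_\tau\rangle-\|\Delta(a)\|_2^2 .
\]

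\textbf{Per-transition bound.} Drop the nonpositive term $-\|\Delta(a)\|_2^2$ and apply Cauchy--Schwarz to get $E^\star(a_t;\tau)-E^\star(a;\tau)\le 2\|\Delta(a)\|_2\|\xi_\tau\|_2$. For $a\in\mathcal C_\rho(a_t)$, the $L$-Lipschitz assumption gives $\|\Delta(a)\|_2\le L\|a-a_t\|_2\le L\rho$. The right-hand side $2L\rho\|\xi_\tau\|_2$ is nonnegative, so the bound also holds for the positive part.

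\textbf{Minibatch bound.} Let $\hat a_t\in\mathcal C_\rho(a_t)$ for each $\tau\in\mathcal B$. The map $x\mapsto[x]_+$ is monotone and satisfies $[\sum_i x_i]_+\le\sum_i[x_i]_+$. Therefore
\[
\widehat{\mathcal L}_{\mathrm{ACA}}^\star(\mathcal B)\le\frac{1}{|\mathcal B|}\sum_{\tau\in\mathcal B}\left[E^\star(a_t;\tau)-E^\star(\hat a_t;\tau)\right]_+ ,
\]
and the per-transition bound, applied termwise, gives the stated inequality.

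\textbf{Data-dependent alternatives.} The per-transition inequality holds deterministically for every $a$ in the neighborhood. No expectation over $\xi_\tau$ is used. So the bound is unaffected when $\hat a_t$ is chosen as a function of $z_{t+1}$, for example by the gradient step \eqref{eq:aca_searched_action}, as long as $\hat a_t\in\mathcal C_\rho(a_t)$. The projection $\Pi_{\mathcal A}$ could move the point, so the neighborhood condition must be stated as an assumption, as the lemma does.

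\textbf{Main obstacle.} The argument is short. The one step needing care is this last one: one must not invoke the zero-mean property of $\xi_\tau$ conditional on the action, because it fails once the action depends on the observed future. The proof must rely only on pointwise inequalities.
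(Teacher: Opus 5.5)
Your proof is correct and follows the paper's argument step for step. It uses the same pointwise expansion $E^\star(a_t;\tau)-E^\star(a;\tau)=2\langle\Delta_\tau(a),\xi_\tau\rangle-\|\Delta_\tau(a)\|_2^2$, then Cauchy--Schwarz with the Lipschitz bound, then Jensen/subadditivity of $[\cdot]_+$ for the minibatch average, and the same observation that nothing is averaged over $\xi_\tau$, so data-dependent $\hat a_t$ are covered. (Minor aside: when $\mathcal A$ is convex, e.g.\ a box, $\Pi_{\mathcal A}$ is nonexpansive and $a_t\in\mathcal A$, so the projection in the searched action cannot push $\hat a_t$ outside $\mathcal C_\rho(a_t)$, and the neighborhood assumption holds automatically there.)
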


\begin{proof}
For a transition $\tau$ and a candidate action $a$, let
$\Delta_\tau(a)=\mu(z_t,a)-\mu(z_t,a_t)$.
Expanding the squared prediction errors and applying
Cauchy-Schwarz gives
\[
\begin{aligned}
\left[E^\star(a_t;\tau)-E^\star(a;\tau)\right]_+
&=
\left[
2\langle\Delta_\tau(a),\xi_\tau\rangle
-\|\Delta_\tau(a)\|_2^2
\right]_+
\\
&\le
2\|\Delta_\tau(a)\|_2\|\xi_\tau\|_2
\\
&\le
2L\rho\|\xi_\tau\|_2.
\end{aligned}
\]
This inequality holds for every action in the neighborhood,
including an observation-dependent selection.

For the minibatch objective, convexity of the positive-part
function gives
\[
\begin{aligned}
\widehat{\mathcal L}_{\mathrm{ACA}}^\star(\mathcal B)
&=
\left[
\frac{1}{|\mathcal B|}
\sum_{\tau\in\mathcal B}
\left(
E^\star(a_t;\tau)-E^\star(\hat a_t;\tau)
\right)
\right]_+
\\
&\le
\frac{1}{|\mathcal B|}
\sum_{\tau\in\mathcal B}
\left[
E^\star(a_t;\tau)-E^\star(\hat a_t;\tau)
\right]_+
\\
&\le
\frac{2L\rho}{|\mathcal B|}
\sum_{\tau\in\mathcal B}\|\xi_\tau\|_2.
\end{aligned}
\]
Taking expectations over minibatches yields the corresponding
bound on the expected ACA objective.
\end{proof}
ACA applies the positive-part operation after averaging the factual-alternative error differences within each minibatch. The lemma bounds the contribution of conditional variability
to this minibatch comparison penalty. Since the searched alternatives depend on the observed futures, minibatch averaging does not in general eliminate this contribution, even for an
ideal predictor.

\paragraph{What ACA constrains. (gradient analysis)}
ACA constrains the relative prediction errors around an observed
action; it does not directly maximize the sensitivity of the predictor
to actions. This distinction is important. Let
\[
r_t=P_\phi(z_t,a_t)-z_{t+1}, \qquad
J_t=\left.\frac{\partial P_\phi(z_t,a)}
{\partial a}\right|_{a=a_t}.
\]
Then
\[
\nabla_a E_\phi(a_t;\tau)=2J_t^\top r_t.
\]
Therefore, a small factual error gradient does not imply an
action-invariant predictor. In particular, an accurate predictor has
$r_t=0$ and hence $\nabla_aE_\phi(a_t;\tau)=0$ even when
$J_t\neq 0$. Locally,
\[
E_\phi(a_t+\delta;\tau)-E_\phi(a_t;\tau)
=
\delta^\top J_t^\top J_t\delta
+O(\|\delta\|^3),
\]
so an action-sensitive predictor can have a stationary factual action while assigning increasing error to nearby actions.

\paragraph{Summary}
The above results and analysis clarify the principle underlying ACA. Theorem~\ref{thm:expected_ordering} shows that, for an ideal predictor, the expected error increase relative to the factual action equals the squared difference between the corresponding latent conditional means, making the factual action a natural reference for local action comparisons. Corollary~\ref{cor:transition_ordering} shows that this relation holds transition-wise when the conditioning variables determine the next latent state, while Lemma~\ref{lem:variability-bound} bounds the contribution
of conditional variability to both individual error advantages
and the resulting minibatch penalty. The gradient analysis further shows that a stationary prediction error at the factual action is compatible with action-sensitive dynamics. These results motivate a straightforward training principle: use each observed action–consequence pair to supervise not only the factual prediction, but also the model  evaluation of nearby actions. ACA implements this principle as a minibatch-level regularizer alongside forward prediction, providing a local comparison signal for the action selection required by planning.

\section{Datasets} \label{data}

\begin{figure}[H]
    \centering
    \includegraphics[width=1\linewidth]{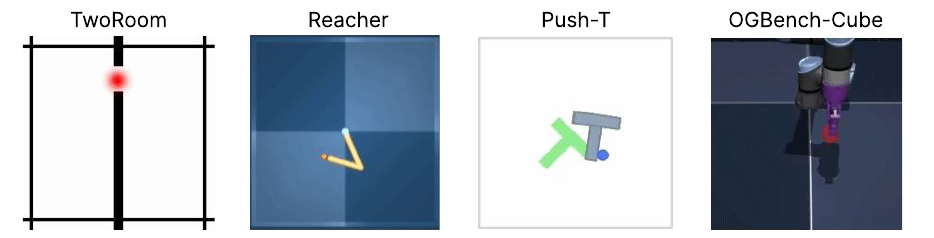}
    \caption{Four evaluation environments}
    \label{environments}
\end{figure}
We conduct experiments on the same four environments (shown in Fig~\ref{environments}) and offline datasets following \cite{maes2026leworldmodel,inverse}. All models are implemented, trained, and evaluated using the codebase of \cite{inverse,maes2026stable}, which provides a unified interface for environment interaction, dataset processing, and downstream planning. For each environment, we split the data at the episode level into 90\% for training and 10\% for validation.The model are optimized only on the training trajectories, whereas all reported result are evaluated exclusively on the held-out validation episodes.

 \textbf{TwoRoom.} \quad  TwoRoom \cite{PLDM,tworomm2} is a two-dimensional point-navigation environment consisting of two rooms connected by a single doorway. The agent begins in one room while the target is located in the other, requiring it to first navigate toward the doorway and then proceed to the goal after crossing the separating wall. The offline dataset contains 10,000 noisy scripted trajectories, with an average episode length of 92 steps.

\textbf{Push-T.} \quad Push-T \cite{pusht} is a planar, contact-rich manipulation task in which a point-like agent must push a T-shaped object toward a desired target configuration. We use the expert demonstrations released with DINO-WM \cite{dinowm}, comprising 20,000 episodes with an average length of 196 steps.

\textbf{OGBench-Cube.} \quad OGBench-Cube \cite{park2025ogbench} is a tabletop manipulation benchmark in which a robotic gripper is tasked with moving a cube to a specified target pose. We consider the single-cube variant and use the benchmark's scripted offline dataset, which consists of 10,000 trajectories of 200 steps each.

\textbf{Reacher.} \quad Reacher \cite{Reacher} is the standard two-link robotic arm environment from the DeepMind Control Suite. Following the DINO-WM formulation, the objective is defined in terms of matching the target joint configuration rather than merely minimizing end-effector distance. We use an offline dataset of 10,000 trajectories generated by a SAC policy \cite{haarnoja2018soft}, with each episode containing 200 steps.

\textbf{OGBench-Cube-Moderate.} The Moderate protocol preserves the original position-based success criterion: an episode succeeds when the Euclidean distance between the cube center and its target is at most 4 cm. Cube orientation and the final robot or gripper configuration are not evaluated, and the first qualifying step counts as success without a persistence requirement. This protocol retains compatibility with the original task criterion while excluding initially solved evaluation pairs.

\textbf{OGBench-Cube-Strict.} The Strict protocol additionally evaluates orientation and short-term persistence. Success requires a cube-center distance of at most 3 cm and an orientation error of at most 15°, simultaneously maintained for three consecutive simulator steps. Orientation error is minimized over the cube 24 rotational symmetries, treating geometrically equivalent orientations as identical. The final robot or gripper configuration remains unconstrained. This protocol evaluates whether the cube reaches and briefly maintains the target pose, rather than merely entering a positional tolerance region.

\textbf{TwoRoom-Strict.} Evaluation pairs are restricted to start and goal states in opposite rooms, with collision-free placements of the full agent disk of radius 7 pixels. The recorded trajectory segment connecting each pair must also be collision-free under continuous swept-disk geometry. During evaluation, collision checking accounts for the full disk throughout each movement, preventing wall crossings that could be missed by endpoint-only checks. Success requires a valid trajectory, at least one legal crossing through the doorway, arrival on the goal side of the wall, and a distance to the goal strictly below 8 pixels. The first step satisfying all conditions counts as success; no additional persistence requirement is imposed.

\section{Experiment Details} \label{experiment details}

\subsection{Architecture} \label{app:architecture}
For the network architecture we used in this paper, we follow the encoder and forward model of LeWorldModel \cite{maes2026leworldmodel} and SMWM \cite{inverse}. Inputs are \(224 \times224\) RGB images. In detail, for the encoder \(f_{\theta}\) we use a Vision Transformer Tiny
(ViT-Tiny) with patch size 14 with approximate
5M parameters. For the forward predictor \(P_{\phi}\) we use a ViT-S backbone (with approximate 10M parameters) where the action \(a\) is injected via AdaLN-zero. We set the history length to 1 on every environment following \cite{inverse}. For SMWM, they use an extra 2-layer
MLP as an inverse model with approximately 0.1M parameters.

\subsection{Training and Planning Details} \label{app:training-planning}
All world models are trained end-to-end under the same settings as LeWM and SMWM. As for planning setting, we use the same MPC and CEM hyperparameters in all four environment and baselines. The detailed hyperparameters for training and planning are shown in Table~\ref{tab:train_plan_hparams}.  
\begin{table}[t]
    \centering
    \caption{Training and planning hyperparameters.}
    \label{tab:train_plan_hparams}
    \renewcommand{\arraystretch}{1.05}
    \setlength{\tabcolsep}{12pt}
    
    \begin{tabular}{lc}
        \toprule
        \textbf{Hyperparameter} & \textbf{Value} \\
        \midrule
        
        \multicolumn{2}{c}{\textbf{Training}} \\
        \midrule
        Optimizer & AdamW \\
        Learning rate & $10^{-4}$ \\
        Weight decay & $10^{-3}$ \\
        Batch size & 256 \\
        Training epochs & 10 \\
        Gradient clipping & 1.0 \\
        
        \midrule
        \multicolumn{2}{c}{\textbf{Planning}} \\
        \midrule
        Planning horizon $H$ & 5 model steps \\
        Executed steps $K$ & 5 model steps \\
        Action block & 5 primitive actions \\
        CEM population & 300 \\
        CEM iterations & 30 \\
        CEM elites & 30 \\
        Initial CEM variance scale & 1.0 \\
        Evaluation tasks & 100 \\
        Goal offset & 25 primitive steps \\
        Evaluation budget & 50 primitive steps \\
        
        \bottomrule
    \end{tabular}
\end{table}

\subsection{Hyper-parameter selections and sensitive analysis for ACA} \label{app:hyperparameters}
We consider two hyperparameters for ACA: the action perturbation
radius $\rho$, which controls the search step for
counterfactual actions, and the loss weight $\lambda_{\mathrm{ACA}}$,
which controls the contribution of the ACA comparison penalty to the overall training objective.
We follow the hyperparameter-selection procedure used in SMWM \cite{inverse} and LeWM \cite{maes2026leworldmodel}, selecting the ACA perturbation radius and regularization weight for each environment and evaluation protocol. 
Importantly, the baseline models are not left untuned: we reproduce SMWM and LeWM using their published, environment-specific recommended regularization weights, while tuning only the two additional hyper-parameters introduced by ACA under the same hyperparameter-selection protocol.
We select $\rho$ from
$\{0.05, 0.1, 0.25, 0.5\}$ and $\lambda_{\mathrm{ACA}}$ from
$\{0.5, 1.0, 1.5\}$. The detailed hyper-parameter of our method is shown in Table~\ref{tab:hyperparameters}. As for the hyper-parameter for SMWM and LeWM, we use the suggested value in their original paper which also shown in Table~\ref{tab:hyperparameters}. We analyze the sensitivity of ACA to the perturbation radius and loss weight on Reacher shown in Figure~\ref{sensitivereacher}. An excessively large perturbation radius weakens the planning gains, highlighting the importance of keeping counterfactual comparisons within an appropriate neighborhood. Meanwhile, ACA consistently outperforms the baseline across the tested nonzero loss weights, suggesting that its benefits do not require precise tuning of the penalty strength within this range.

\begin{figure}[t]
    \centering
    \includegraphics[width=1\linewidth]{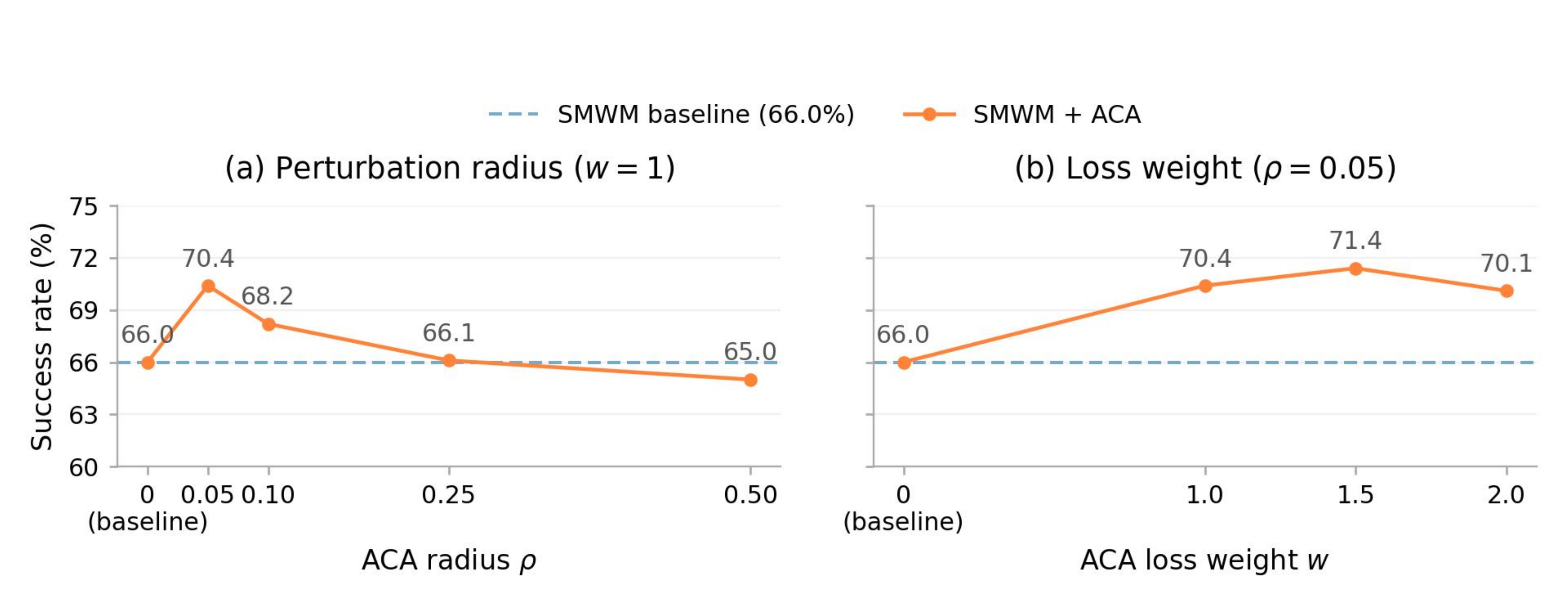}
    \caption{Sensitive Analysis on Reacher}
    \label{sensitivereacher}
\end{figure}
\begin{table}[t]
    \centering
    \caption{Hyperparameter configurations across environments.
    An asterisk denotes the strict evaluation protocol.}
    \label{tab:hyperparameters}
    \small
    \setlength{\tabcolsep}{8pt}
    \begin{tabular}{llcccc}
        \toprule
        Environment & Method
        & InvReg $\lambda$ & SIGReg $\lambda$
        & ACA $\rho$ & ACA $w$ \\
        \midrule
        TwoRoom & SMWM     & 0.1 & --   & --   & --  \\
        TwoRoom & LeWM     & --  & 0.09 & --   & --  \\
        TwoRoom & SMWM+ACA & 0.1 & --   & 0.10 & 1.0 \\
        TwoRoom & LeWM+ACA & --  & 0.09 & 0.10 & 1.0 \\
        \midrule
        TwoRoom$^{*}$ & SMWM     & 0.1 & --   & --   & --  \\
        TwoRoom$^{*}$ & LeWM     & --  & 0.09 & --   & --  \\
        TwoRoom$^{*}$ & SMWM+ACA & 0.1 & --   & 0.25 & 1.0 \\
        TwoRoom$^{*}$ & LeWM+ACA & --  & 0.09 & 0.05 & 1.0 \\
        \midrule
        Reacher & SMWM     & 5.0 & --   & --   & --  \\
        Reacher & LeWM     & --  & 0.09 & --   & --  \\
        Reacher & SMWM+ACA & 5.0 & --   & 0.05 & 1.5 \\
        Reacher & LeWM+ACA & --  & 0.09 & 0.50 & 1.0 \\
        \midrule
        Cube & SMWM     & 1.0 & --   & --   & --   \\
        Cube & LeWM     & --  & 0.09 & --   & --   \\
        Cube & SMWM+ACA & 1.0 & --   & 0.05 & 1.5 \\
        Cube & LeWM+ACA & --  & 0.09 & 0.05 & 1.5 \\
        \midrule
        Cube$^{*}$ & SMWM     & 1.0 & --   & --   & --  \\
        Cube$^{*}$ & LeWM     & --  & 0.09 & --   & --  \\
        Cube$^{*}$ & SMWM+ACA & 1.0 & --   & 0.10 & 1.5 \\
        Cube$^{*}$ & LeWM+ACA & --  & 0.09 & 0.05 & 1.5 \\
        \midrule
        Push-T & SMWM     & 30.0 & --   & --   & --  \\
        Push-T & LeWM     & --   & 0.09 & --   & --  \\
        Push-T & SMWM+ACA & 30.0 & --   & 0.25 & 1.0 \\
        Push-T & LeWM+ACA & --   & 0.09 & 0.25 & 1.5 \\
        \bottomrule
    \end{tabular}
\end{table}

\section{Extra Visualization Results} \label{app:visualizations}

Figure~\ref{extravisualgoalerror} extends the planning-cost analysis with additional examples across environments and base models. Each triplet evaluates the same action sequences on a shared two-dimensional slice, showing the real goal error, baseline planning cost, and ACA planning cost. In these examples, low baseline cost can favor plans with relatively poor physical outcomes, whereas ACA shifts the model preference toward plans with lower real goal error. The relevant improvement is in which plans the model favors, rather than exact agreement between the learned and physical cost landscapes. These visualizations relate changes in learned action preferences to physical goal attainment and provide qualitative evidence that complements the aggregate planning results.

\section{Performance Under Additional Planning Settings}
\label{sec:additional_planning}

We complement the main evaluation with single-plan and
long-horizon settings to examine planning without intermediate
feedback and over longer predicted trajectories.

\paragraph{Single-plan evaluation.}
We select goal observations 25 primitive steps after the initial
observations and set the execution budget to 25 steps.
The planner optimizes five action blocks, each containing five
primitive actions, and executes the resulting sequence without
replanning. This setting tests the quality of the initially
selected plan, since actions cannot be corrected using
intermediate observations.
As shown in Table~\ref{oneplan}, ACA improves the reported
success rates of both base models across all evaluated
environments and protocols, supporting its usefulness when
execution relies entirely on the initial action selection.

\paragraph{Long-horizon evaluation.}
We extend both the goal offset and execution budget to
50 primitive steps. The planner optimizes ten action blocks
at the initial state and executes the full sequence without
replanning. Selecting actions now requires evaluating their
consequences through twice as many autoregressive prediction
steps. This tests whether the local factual--counterfactual
comparisons introduced during one-step training remain useful
for selecting longer action sequences.
Table~\ref{long} shows higher reported success rates with
ACA for both base models across all evaluated environments
and protocols, providing evidence that its planning benefits
extend to longer open-loop execution.

\begin{table}[t]
    \centering
   \caption{Single-plan evaluation.
Success rates (\%) with a goal offset, planning horizon, and
execution budget of 25 primitive steps. A single action
sequence is optimized from the initial observation and
executed without replanning.
An asterisk denotes the strict evaluation protocol. Success rates (\%) are reported as mean, the standard deviation is provided in Table~\ref{tab:planner_seed_sd_25step}.
}
    \label{oneplan}
    \renewcommand{\arraystretch}{1}
    \setlength{\tabcolsep}{12pt}
    
    \resizebox{1\linewidth}{!}{%
    \begin{tabular}{lcccccc}
        \toprule
        \textbf{Method} 
        & \textbf{TwoRoom} 
        & \textbf{TwoRoom*} 
        & \textbf{Reacher} 
        & \textbf{Push-T} 
        & \textbf{\shortstack{OGBench-\\Cube}} 
        & \textbf{\shortstack{OGBench-\\Cube*}} \\
        \midrule
        LeWM
        & $89.8$
        & $37.0$
        & $36.6$
        & $76.6$
        & $60.0$
        & $22.0$ \\

        SMWM
        & $92.6$
        & $25.8$
        & $33.4$
        & $81.6$
        & $82.2$
        & $45.2$ \\
        
        \midrule

        \textbf{LeWM+ACA}
        & $\mathbf{90.2}$
        & $\mathbf{39.6}$
        & $\mathbf{38.4}$
        & $\mathbf{78.6}$
        & $\mathbf{63.0}$
        & $\mathbf{23.8}$ \\

        \textbf{SMWM+ACA}
        & $\mathbf{97.0}$
        & $\mathbf{67.2}$
        & $\mathbf{39.6}$
        & $\mathbf{83.0}$
        & $\mathbf{85.0}$
        & $\mathbf{47.2}$ \\
        
        \bottomrule
    \end{tabular}%
    }
\end{table}

\begin{table}[t]
    \centering
\caption{Long-horizon evaluation.
Success rates (\%) with a goal offset, planning horizon, and
execution budget of 50 primitive steps. The complete
50-step action sequence is planned at the initial state
and executed without intermediate replanning.
An asterisk denotes the strict evaluation protocol. Success rates (\%) are reported as mean, the standard deviation is reported in Table~\ref{tab:planner_seed_sd_additional}}
    \label{long}
    \renewcommand{\arraystretch}{1}
    \setlength{\tabcolsep}{12pt}
    
    \resizebox{1\linewidth}{!}{%
    \begin{tabular}{lcccccc}
        \toprule
        \textbf{Method} 
        & \textbf{TwoRoom} 
        & \textbf{TwoRoom*} 
        & \textbf{Reacher} 
        & \textbf{Push-T} 
        & \textbf{\shortstack{OGBench-\\Cube}} 
        & \textbf{\shortstack{OGBench-\\Cube*}} \\
        \midrule
        
        LeWM
        & 56.0
        & 22.6
        & 26.6
        & 19.2
        & 37.6
        & 16.6 \\
        
        SMWM
        & 55.6
        & 32.0
        & 28.2
        & 33.0
        & 49.0
        & 34.8 \\
        
        \midrule
        
        \textbf{LeWM+ACA}
        & $\mathbf{59.6}$
        & $\mathbf{27.6}$
        & $\mathbf{28.8}$
        & $\mathbf{20.6}$
        & $\mathbf{40.0}$
        & $\mathbf{17.8}$ \\
        
        \textbf{SMWM+ACA}
        & $\mathbf{77.4}$
        & $\mathbf{61.2}$
        & $\mathbf{30.6}$
        & $\mathbf{34.4}$
        & $\mathbf{52.4}$
        & $\mathbf{38.0}$ \\
        
        \bottomrule
    \end{tabular}%
    }
\end{table}

\clearpage
\begin{figure}[p]
    \centering
    \begin{subfigure}{\linewidth}
        \centering
        \includegraphics[
            width=\linewidth,
            height=0.78\textheight,
            keepaspectratio
        ]{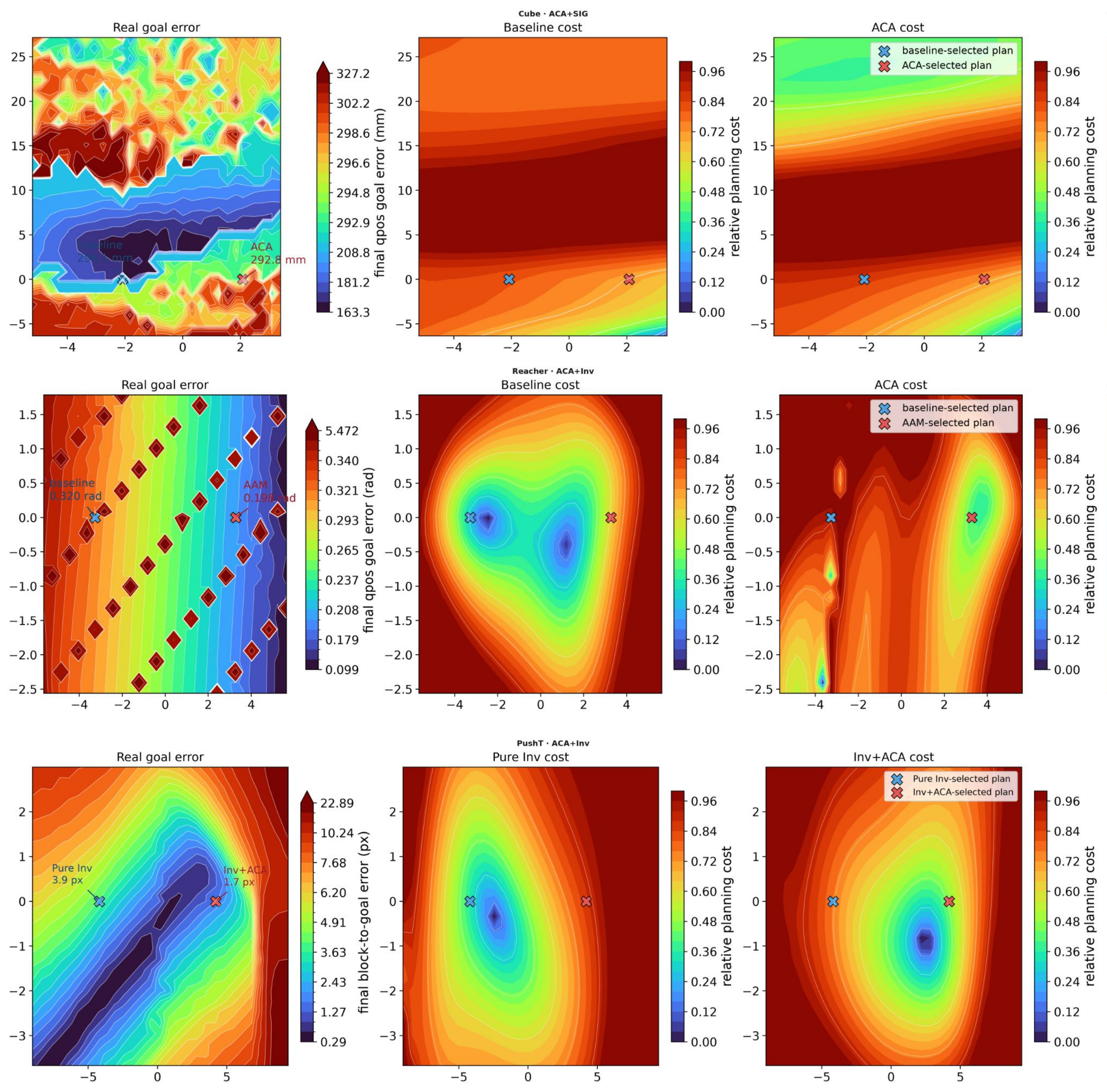}
        \caption{}
        \label{fig:extra-real-error_1}
    \end{subfigure}
    \phantomcaption
\end{figure}

\clearpage
\begin{figure}[p]
    \ContinuedFloat
    \centering
    \begin{subfigure}{\linewidth}
        \centering
        \includegraphics[
            width=\linewidth,
            height=0.78\textheight,
            keepaspectratio
        ]{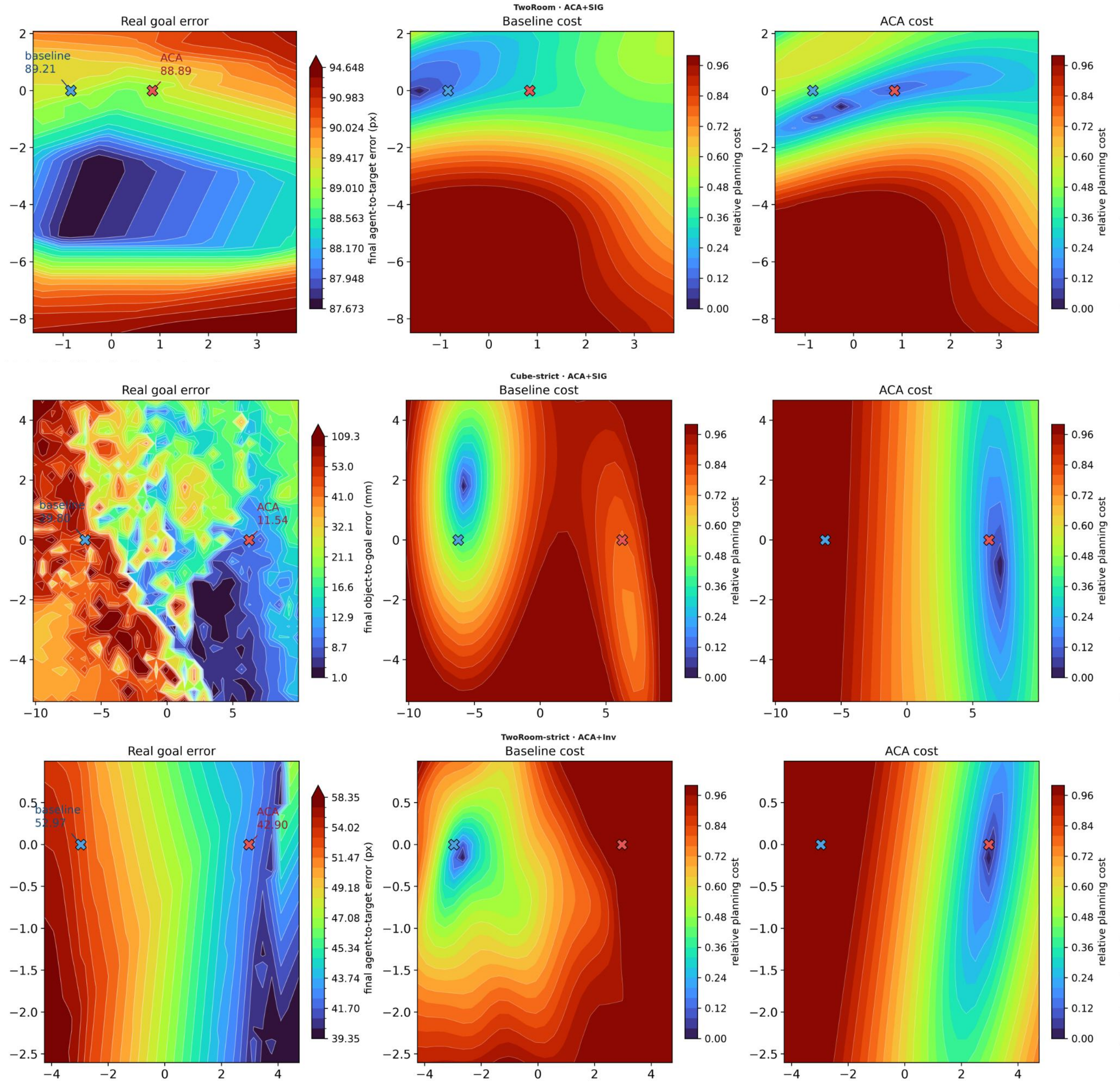}
        \caption{}
        \label{fig:extra-baseline-cost}
    \end{subfigure}
    \phantomcaption
\end{figure}

\clearpage
\begin{figure}[p]
    \ContinuedFloat
    \centering
    \begin{subfigure}{\linewidth}
        \centering
        \includegraphics[
            width=\linewidth,
            height=0.78\textheight,
            keepaspectratio
        ]{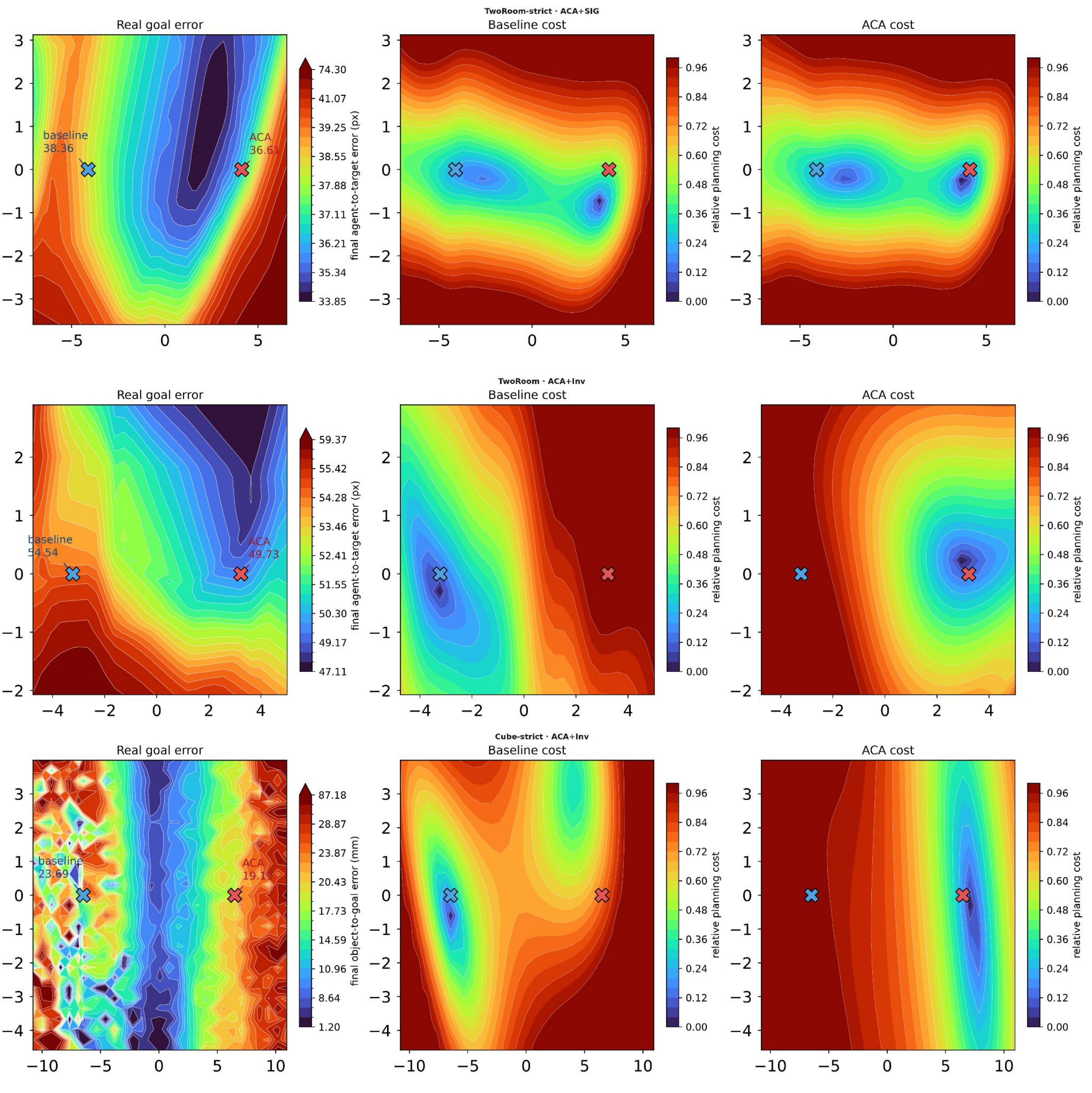}
        \caption{}
        \label{fig:extra-aca-cost}
    \end{subfigure}

    \caption{Additional visualizations of planning costs and physical
    outcomes. Each triplet shows real goal error, baseline planning
    cost, and ACA planning cost for the same action plans on a shared
    two-dimensional slice. The marked plans illustrate how model-based
    action selection translates into physical goal attainment across
    the displayed environments and base models.}
    \label{extravisualgoalerror}
\end{figure}
\clearpage

\begin{table}[t]
    \centering
    \caption{
    Ablation of action search and regularization with SMWM
    trained on 5\% of the training data.
    Success rates (\%) are reported as mean $\pm$ standard deviation
    over five planner seeds.
    An asterisk denotes the CLEAR strict evaluation protocol.
    }
    \label{tab:aca_ablation}
    \small
    \setlength{\tabcolsep}{7pt}
    \renewcommand{\arraystretch}{1.1}
    \begin{tabular}{lccc}
        \toprule
        \textbf{Method}
        & \textbf{Reacher}
        & \textbf{OGBench-Cube}
        & \textbf{OGBench-Cube}$^{*}$ \\
        \midrule
        SMWM
        & $48.0 \pm 3.8$
        & $65.4 \pm 3.3$
        & $24.0 \pm 2.4$ \\
        SMWM + Random-ACA
        & $53.4 \pm 1.7$
        & $66.8 \pm 2.5$
        & $25.6 \pm 4.0$ \\
        SMWM + Gradient-norm penalty
        & $52.0 \pm 6.9$
        & $70.4 \pm 2.5$
        & $31.0 \pm 3.0$ \\
        \midrule
        \textbf{SMWM + ACA}
        & $\mathbf{56.8 \pm 3.7}$
        & $\mathbf{74.0 \pm 2.4}$
        & $\mathbf{36.4 \pm 4.2}$ \\
        \bottomrule
    \end{tabular}
\end{table}
\section{Ablation of Action Search and Regularization}
\label{app:aca_ablation}

We examine how the candidate search direction and the form of
action-based regularization affect planning performance.
Using SMWM as the base model, we compare ACA with random-direction
comparisons and direct action-gradient regularization. Both variants
retain the forward-prediction and inverse-dynamics objectives and
replace only the ACA term.

\paragraph{Comparison objectives.}

\textbf{Random-ACA} replaces the negative-gradient search direction
with a uniformly sampled unit vector while retaining the same
nominal perturbation radius, action-space projection, and
factual--counterfactual comparison:
\begin{equation}
\begin{aligned}
    \hat a_t^{\mathrm{rand}}
    &= \Pi_{\mathcal A}(a_t + \rho u_t),
    \qquad
    u_t \sim \operatorname{Unif}(\mathbb S^{d_a-1}), \\
    \mathcal L_{\mathrm{rand}}
&=
\mathbb E_{\mathcal B,\{u_\tau\}}
\left[
\left[
\frac{1}{|\mathcal B|}
\sum_{\tau\in\mathcal B}
\left(
E_\phi(a_t;\tau)
-
E_\phi(\hat a_t^{\mathrm{rand}};\tau)
\right)
\right]_+
\right].
\end{aligned}
\end{equation}
This variant evaluates the contribution of directing comparisons
toward alternatives with lower prediction error.
\textbf{Gradient-norm penalty} directly regularizes the sensitivity
of the factual prediction error to the conditioning action:
\begin{equation}
    \mathcal L_{\mathrm{grad}}
    = \mathbb E_{\tau}
    \left[\rho \|g_t\|_2\right],
    \qquad
    g_t =
    \left.\nabla_a E_\phi(a;\tau)\right|_{a=a_t}.
\end{equation}
Optimizing this penalty requires differentiating through $g_t$, which introduces mixed second-order derivatives with respect to actions and model parameters. ACA instead stops gradients through the searched candidate and therefore avoids these second-order derivatives through the action-search procedure. For locally smooth errors and small unprojected steps, ignoring
numerical stabilization, the minibatch ACA penalty is approximately
\(
\widehat{\mathcal L}_{\mathrm{ACA}}(\mathcal B)
\approx
\frac{\rho}{|\mathcal B|}
\sum_{\tau\in\mathcal B}\|g_t\|_2.
\) The gradient-norm control captures this leading-order behavior, while ACA evaluates the actual error difference at the perturbed action.

Importantly, this approximation concerns the gradient of the
\emph{prediction error}, rather than the action sensitivity of the
predictor itself. Let
\[
r_t=P_\phi(z_t,a_t)-z_{t+1},
\qquad
J_t=
\left.
\frac{\partial P_\phi(z_t,a)}
{\partial a}
\right|_{a=a_t}.
\]
Then
\[
\nabla_a E_\phi(a_t;\tau)=2J_t^\top r_t.
\]
Hence, reducing the factual error gradient does not in general imply reducing the predictor's action Jacobian. In particular, an accurate action-sensitive predictor may satisfy $r_t=0$ and therefore $\nabla_aE_\phi(a_t;\tau)=0$, while still having $J_t\neq0$. Thus, the first-order approximation above should not be interpreted as showing that ACA necessarily makes the predictor locally action-invariant. At the same time, ACA alone does not exclude an action-independent solution; we therefore regard it as a local
comparison regularizer within the joint training objective.

\paragraph{Experimental setup.}
We use the same 5\% training subset within each environment and
a shared planning protocol across variants. Reacher models are
trained for 20 epochs and OGBench-Cube models for 10 epochs,
using cosine learning-rate annealing from $10^{-4}$.
The regularized variants use
$(\rho,\lambda_{\mathrm{ACA}})=(0.05,1.0)$ on Reacher and
$(0.25,1.5)$ on OGBench-Cube, with $\lambda_{\mathrm{ACA}}$
weighting the corresponding additional objective.
We evaluate Cube under both the standard and CLEAR strict protocols.

\paragraph{Results and analysis.}
Table~\ref{tab:aca_ablation} shows that both controls improve mean planning success over SMWM, suggesting that local action comparisons and error-gradient regularization each provide useful training signals. ACA achieves the highest mean success rate across all three
evaluation settings. Its advantage over Random-ACA supports the value of directing comparisons toward actions that challenge the factual explanation. Relative to the gradient-norm penalty, ACA improves success by 4.8, 3.6, and 5.4 percentage points on Reacher, OGBench-Cube, and OGBench-Cube$^{*}$, respectively. The advantage also holds under the stricter Cube success criterion, supporting its usefulness when goal attainment requires greater
physical precision. Together, these results favor explicit
factual-counterfactual comparisons at a finite perturbation radius over the tested random-direction and local gradient-norm alternatives.

\section{Variability Across Planner Seeds} \label{appendixsd}
Tables~\ref{tab:planner_seed_sd},
\ref{tab:planner_seed_sd_25step},
\ref{tab:planner_seed_sd_additional}
 summarize the standard deviations across five planner random seeds for each fixed trained checkpoint, expressed in percentage points. These results cover the main experiments in this paper. This evaluation follows the benchmark protocol adopted from SMWM; we therefore keep the trained checkpoint fixed and vary the stochastic planner seed for both baseline and ACA models. Across the evaluated environments and protocols, the standard deviations are generally remain comparable in magnitude between the baseline and ACA. These statistics characterize the variability introduced by stochastic planning and provide additional context for interpreting the corresponding mean performance.

\begin{table}[t]
    \centering
    \caption{Standard deviation of success rates across five planner
    seeds under the 25-step, 2-plan evaluation protocol
    (in percentage points). The main results is in Table~\ref{latentplanning}.}
    \label{tab:planner_seed_sd}
    \begin{tabular}{lcccc}
        \toprule
        & \multicolumn{2}{c}{LeWM}
        & \multicolumn{2}{c}{SMWM} \\
        \cmidrule(lr){2-3} \cmidrule(lr){4-5}
        Environment & Baseline & + ACA & Baseline & + ACA \\
        \midrule
        Cube          & 2.88 & 1.73 & 1.64 & 3.35 \\
        Cube-strict   & 3.36 & 2.77 & 3.03 & 2.39 \\
        Reacher       & 3.11   & 2.19 & 3.58 & 2.65 \\
        Push-T        & 2.35 & 3.85 & 3.21 & 2.30 \\
        TwoRoom       & 1.53  & 1.22 & 1.10 & 1.10 \\
        TwoRoom-strict & 2.68 & 3.51 & 2.12 & 2.59 \\
        \bottomrule
    \end{tabular}
\end{table}
\begin{table}[t]
    \centering
    \caption{Sample standard deviation of success rates across five
    planner seeds under the 25-step, 1-plan evaluation protocol
    (in percentage ). The main result is shown in Table~\ref{oneplan}.}
    \label{tab:planner_seed_sd_25step}
    \begin{tabular}{lcccc}
        \toprule
        & \multicolumn{2}{c}{LeWM}
        & \multicolumn{2}{c}{SMWM} \\
        \cmidrule(lr){2-3} \cmidrule(lr){4-5}
        Environment & Baseline & + ACA & Baseline & + ACA \\
        \midrule
        Cube          & 2.55 & 1.58 & 1.64 & 1.87 \\
        Cube-strict   & 3.74 & 2.17 & 3.96 & 4.97 \\
        Reacher       & 2.65 & 2.51 & 2.88 & 2.79 \\
        Push-T        & 2.70 & 2.12 & 1.67 & 3.35 \\
        TwoRoom       & 0.84 & 1.79 & 2.07 & 1.41 \\
        TwoRoom-strict & 1.00 & 2.51 & 5.17 & 4.76 \\
        \bottomrule
    \end{tabular}
\end{table}
\begin{table}[t]
    \centering
    \caption{ Standard deviation of success rates across five planner seeds under the 50-step, 1-plan evaluation protocol (in percentage points). The main result is shown in Table~\ref{long}.}
    \label{tab:planner_seed_sd_additional}
    \begin{tabular}{lcccc}
        \toprule
        & \multicolumn{2}{c}{LeWM}
        & \multicolumn{2}{c}{SMWM} \\
        \cmidrule(lr){2-3} \cmidrule(lr){4-5}
        Environment & Baseline & + ACA & Baseline & + ACA \\
        \midrule
        Cube          & 3.29 & 1.41 & 3.24 & 2.30 \\
        Cube-strict   & 1.87 & 2.24 & 1.79 & 3.54 \\
        Reacher       & 2.82 & 3.12 & 3.96 & 3.63 \\
        Push-T        & 1.14 & 2.39 & 4.69 & 2.30 \\
        TwoRoom       & 3.54 & 4.16 & 1.52 & 0.55 \\
        TwoRoom-strict & 4.16 & 2.30 & 3.83 & 2.78 \\
        \bottomrule
    \end{tabular}
\end{table}

\section{Discussion} \label{discussion}

\textbf{Positioning ACA.} \quad
The approaches discussed in section~\ref{related work} emphasize preserving action-relevant information and making predictions useful for decision making. Our work makes the connection between these requirements explicit through the action evaluations induced by the predictor. For latent action-conditioned world models exemplified by joint-embedding approaches, planning evaluates candidate actions through predicted latent trajectories. The predictor therefore determines not only what future is imagined, but also which action appears preferable. We examine this connection at the level of individual transitions. This makes the prediction-planning gap concrete as a discrepancy between model preference and physical target attainment. It also provides a direct training opportunity: the factual action-consequence pair supplies a reference for supervising the same comparisons that expose the discrepancy. ACA turns this reference into a local objective that penalizes the error advantage assigned to searched alternatives. In this way, ACA complements the approaches above by translating the requirement of decision-relevant prediction into explicit comparison supervision on the existing forward predictor, without additional outcome labels for the alternatives or a separate evaluation module.

\textbf{A shift in generalization requirements.} \quad
In standard supervised prediction, generalization primarily concerns transfer from a finite training sample to unseen examples, with training and deployment typically organized around the same predictive task. Methods such as SAM~\cite{sam} seek to improve the model generalization performance on unseen data distribution within this framework. Action-conditioned world models used for planning extend this requirement: they learn dynamics from transitions that have occurred, but are subsequently queried to imagine futures under candidate actions for a given goal. Actions thus change from conditions supplied by the dataset to variables actively varied and selected during decision making. The learned dynamics must also be reused across new combinations of action sequences and goals. Transfer therefore occurs not only across examples, but also across uses of the learned knowledge: from accounting for recorded experience to reasoning about and selecting possible actions. This requirement persists even when the environment and its physical dynamics remain unchanged, because transitions covered by data-collection behavior need not cover the possibilities that goal-directed planning must compare. Generalization in world models consequently concerns both prediction on unseen transitions and whether those predictions remain useful under action variation, multi-step composition, and changing goals. The prediction-planning gap reflects a possible disconnect between these requirements: generalization on the predictive task does not automatically imply effective use of the learned dynamics for planning.

\section{Scope and limitation} \label{scope}
Our study focuses on latent action-conditioned world models (mainly JEPA-style models empirically) on simulated visual control tasks, other architectures are not discussed and considered here. We believe extending ACA to larger world models and real-world robotic systems is a promising direction for further validation.
\end{document}